\newtheorem{theorem}{Theorem}
\newtheorem{lemma}[theorem]{Lemma}
\newtheorem{corollary}[theorem]{Corollary}
\newcommand{\X}{\mathcal X}
\newcommand{\Y}{\mathcal Y}
\newcommand{\w}{w}
\newcommand{\dist}{d}
\newcommand{\distw}{\tilde \dist}
\newcommand{\bX}{\boldsymbol X}
\newcommand{\bY}{\bm Y}
\newcommand{\err}{\textup{err}}
\newcommand{\tcl}{h}
\newcommand{\cl}{h}
\newcommand{\chr}[1]{\mathbbm{1}_{\{#1\}}}
\newcommand{\emperr}{\widehat{\textup{err}}}
\DeclareMathOperator{\argmax}{\textup{argmax}}
\DeclareMathOperator{\argmin}{\textup{argmin}}
\newcommand{\ds}{S}
\newcommand{\dsc}{\tilde \ds}
\newcommand{\dsp}{\ds^+}
\newcommand{\dsn}{\ds^-}
\newcommand{\dscp}{\dsc^+}
\newcommand{\dscn}{\dsc^-}
\newcommand{\dsco}{\dsc^*}
\newcommand{\tX}{\tilde x}
\newcommand{\tY}{\tilde y}
\newcommand{\PROB}{\mathbb{P}}
\newcommand{\EXP}{\mathbb{E}}
\newcommand{\eps}{\varepsilon}
\newcommand{\net}{\boldsymbol{X}}
\newcommand{\supp}{\textup{supp}}
\newcommand{\unifset}{U}
\newcommand{\ball}{B}
\newcommand{\wset}{\tilde{\mathcal W}}
\newcommand{\rad}{r}
\newcommand{\rado}{\rad^*}
\newcommand{\recon}{\mathcal R}
\newcommand{\I}{\mathcal I}
\DeclareMathOperator{\ddim}{ddim}
\DeclareMathOperator{\diam}{diam}
\DeclareMathOperator{\enemy}{ne}
\DeclareMathOperator*{\argminn}{arg\,min}
\begin{document}

%

%

%
%
%

\title{Weighted Distance Nearest Neighbor Condensing}

\author{Lee-Ad Gottlieb \and Timor Sharabi \and Roi Weiss}

\date{Ariel University}

\maketitle 

\begin{abstract}
The problem of nearest neighbor condensing has enjoyed a long history of study, both in its theoretical and practical aspects. In this paper, we introduce the problem of weighted distance nearest neighbor condensing, where one assigns weights to each point of the condensed set, and then new points are labeled based on their weighted distance nearest neighbor in the condensed set. 

We study the theoretical properties of this new model, and show that it can produce dramatically better condensing than the standard nearest neighbor rule, yet is characterized by generalization bounds almost identical to the latter. 
We then suggest a condensing heuristic for our new problem. We demonstrate Bayes consistency for this heuristic, and also show promising empirical results.
\end{abstract}

\section{Introduction} 

The nearest neighbor (NN) classifier, introduced by \cite{fixandhodge}, is an intuitive and popular learning tool. In this model, a learner observes a sample of (typically binary) labeled points, and given a new point to be classified, assigns the new point the same label as its nearest neighbor in the sample.
%
%
It was subsequently demonstrated by \citet{1053964} that when no label noise is present, the nearest neighbor classifier's expected error converges to zero as the sample size grows.
This and other results helped spawn a deep body of research into proximity-based classification  \citep{devroye1996probabilistic,shwartz2014understanding}.

Nearest neighbor classifiers enjoy other advantages as well. They require only a distance function on the points, and do not even require that the host space be metric. They also extend naturally to the multi-class setting. Yet they are not without their disadvantages: For example, a naive nearest neighbor approach may require storing the entire sample. 
To address the disadvantages of the NN classifier, \cite{hart1968condensed} introduced the technique of sample compression for the NN classifier. This work defined the minimum consistent subset problem (also called the nearest neighbor condensing problem): Given a sample, find a minimal subset of the sample that is {\em consistent} with it, meaning that for every point of the sample, its nearest neighbor in the subset (that is the condensed set) has the same label. \cite{hart1968condensed} further suggested a heuristic for the NN condensing problem. Analysis of this problem, as well as the creation of new heuristics for it, has been the subject of extensive research since its introduction \citep{gates72,RWLI-75,devroye1996probabilistic,wilson00}.

\paragraph{Our Contribution}
In this paper, we introduce a novel modification of the NN condensing problem. In our new condensing problem (presented formally in Section \ref{sec:rules}), each point of the condensed set is also assigned a weight. We modify the distance function to consider \emph{weighted distance}, meaning that the distance from a new point to a point in the condensed set is their original distance divided by the weight of the point in the condensed set. It follows that assigning high weight to a point in the condensed set increases its influence on the labeling of new points. We call the new problem of choosing a condensed set and assigning its weights the \emph{weighted distance nearest neighbor condensing problem}.

We proceed with an in-depth study of the statistical properties of weighted distance condensing (Section \ref{sec:properties}). Crucially, we find that our model allows dramatically better condensing than what is possible under standard (that is, unweighted) NN condensing: There are families of instances wherein the optimal condensing under unweighted NN rule is of size $\Theta(n)$, while condensing under the weighted NN (WNN) rule yields a condensed set of size exactly 2 (Theorem \ref{thm:condensing}). At the same time, we demonstrate generalization bounds for condensing under the WNN rule which are almost identical to those previously known for the NN rule (Theorem \ref{thm:generalization bound}). This means that the much more powerful weighted rule can be adopted with only negligible increase in the variance of the model, so that the more powerful rule does not contribute to overfitting.

Having established these statistical properties of WNN condensing, we suggest a greedy-based heuristic for this problem, that is the identification of a condensed set and the assignment of weights to members of this set (Section \ref{sec:heuristic}). We further demonstrate that the suggested heuristic is a member of a broad set of classifiers which are Bayes consistent, thereby lending statistical support to its use.

After deciding on a heuristic for our condensing problem, we compare its empirical condensing abilities to those of popular heuristics for the unweighted NN problem (Section \ref{sec:results}). We find that that the condensing bounds of our heuristic compare favorably to the others, and this suggests that further research on heuristics for WNN condensing is a promising direction.

Finally, in Section \ref{sec:search}, we consider the related problem of finding a nearest neighbor for a query point, under weighted distances -- this is exactly what must be done to label a new point under WNN condensing. We give a novel weighted distance approximate nearest neighbor search algorithm based on the navigating nets of \cite{KL04}.

\subsection {Related Work}

The nearest neighbor condensing problem is known to be NP-hard \citep{Wil-91,Zuk-10}, and \cite{hart1968condensed} provided the first heuristic for it. Many other heuristics have been suggested since, and we mention only a few of them here: 
\cite{gates72} introduced the reduced nearest neighbor (RNN) rule heuristic to  iteratively contract the sample set.
\cite{RWLI-75} introduced the selective subset heuristic, which additionally guarantees that for any sample point, the distance to its nearest neighbor in the compressed set is less than the distance to any sample point with opposite label. \cite{BFS-05} subsequently suggested a modification to this algorithm, which they called modified selected subset (MSS).
\citet{angiulli2005fast} introduced the fast nearest neighbor condensing (FCNN) heuristic, while \citet{flores2020social} introduced the RSS and VSS heuristics, which modify the FCNN algorithm to improve its behavior in cases where the points are too close to each other.
Another popular heuristic, modified condensed nearest neighbor (MCNN), was introduced by  \cite{devi2002incremental}. 

 No concrete algorithmic condensing bounds were known for NN condensing until the work of \citet{gottlieb2018near}: They derived hardness-of-approximation results, and designed an algorithm called NET, which computes in polynomial time an approximation to the minimum subset almost matching the hardness bounds. This approach was later extended by \citet{gottlieb2019classification} to asymmetric distance function. Also related to this is the result of  \citet{gottlieb2020non}, which presented non-uniform packing, that is using balls with different radii.

As for the statistical properties of NN condensing rules, \citet[Chapter 19]{devroye1996probabilistic} established Bayes consistency for an intractable rule that searches for a condensed set of fixed, data-independent size, minimizing the empirical error on the entire sample. They showed that universal Bayes consistency is achieved in finite-dimensional spaces provided the size of the condensed set is sub-linear in the sample size. 
\citet{HKSW} introduced a computationally-efficient data-dependent sample-compression NN rule, termed OptiNet, that computes a net of the samples and associates each point of the condensed set with the majority vote label in its Voronoi cell. They showed that OptiNet is universally Bayes consistency in any separable metric space. A simpler sub-sampling NN rule achieving universal Bayes consistency in separable metric spaces was demonstrated by \citet{gyorfi2021universal}, establishing also error rates. 
\citet{xue2018achieving} studied the error rates achieved by more complex sub-sampling NN rules. Lastly,  
\citet{kerem2023error} studied jointly-achievable error and compression rates for OptiNet under a margin condition.

\subsection{Preliminaries}

\paragraph{Metric Space.} A \emph{metric} $d$ defined on a point set $\X$ is a positive symmetric function satisfying the triangle inequality, i.e.\ $d(x,y) \le d(x,z) + d(z,y)$. The set $\X$ and metric $d$ together define the metric space $(\X,d)$. 
Let $B(x,r)$ denote the (open) ball centered at $x$ with radius $r$; a point $y \in \X$ is in $B(x,r)$ if $d(x,y) < r$.

\paragraph{Notation.} We use $[k]$ to denote $\{1,\ldots,k\}$. We define the distance between a point $y$ and set $S$ as the distance of $y$ to the closest point in $S$, that is $d(x,S) = \min_{y \in S} d(x,y)$. For a set $S$, we denote its cardinality by $|S|$.

\section{Nearest neighbor rules}\label{sec:rules}

Given a metric space $(\X,d)$ and a labeled sample $\ds\subset\X$ (where $l(x)$ is the label of point $x \in \X$), the nearest neighbor condensing problem is to find a subset $\dsc\subset\ds$ of minimal cardinality, such that the nearest neighbor rule on $\dsc$ classifies all sample points in $\ds$ correctly;
that is, for any point $x\in\ds$,
$l(x) = l(\displaystyle\argmin_{y\in \dsc} \left(d\left(x,y\right)\right))$.

Now let $\w:\X\to (0,\infty)$ be a positive weight function, and define the weighted distance
\begin{align*}
    \distw(x,x') = \frac{d(x,x')}{w(x) \cdot w(x')}, \qquad x,x'\in\X.
\end{align*}
This is our weighted distance nearest neighbor (WNN) rule,
under which we may define a WNN classifier:
\begin{align*}
    h_{(\dsc,\w)}(x) = l(\arg\min_{y \in \tilde{S}} \distw(y,x)),\qquad x\in\X.
\end{align*}
Note that the weighted distance may not satisfy the triangle inequality.
In the weighted condensing problem, we seek a subset $\Tilde{S} \subset S$ and a weight assignment $w$ for $\tilde{S}$ 
(with $w(x)=1$ for all $x\in\X \setminus \tilde{S}$), such that for each point $x \in \ds$, the weighted distance nearest neighbor of $x$ in $\dsc$ has the same label as $x$,
$h_{(\dsc,\w)}(x) = l(x)$.

It is easy to see that WNN is a generalization of the NN rule, as the latter can be recovered by simply taking all weights to be equal to 1. Let us motivate our weighted distance function by illustrating the effect of weighting on the Euclidean decision boundary between two points.
Consider two points in the Euclidean plane, 
$p_1=(x_1,y_1)$ and $p_2=(x_2,y_2)$. If $w(p_1)=w(p_2)$, then the WNN decision boundary is equivalent to the NN decision boundary, defined by 
$$\sqrt{(x_1-x)^2+(y_1-y)^2} = \sqrt{(x_2-x)^2+(y_2-y)^2},$$
which can be simplified to a line in slope-intercept form:
\begin{equation*}
y = x \cdot \frac{x_1-x_2}{y_2-y_1} + \frac{-x_1^2 + x_2^2 - y_1^2 + y_2^2}{2(y_2-y_1)}.
\end{equation*}
In contrast, when $w(p_1) \ne w(p_2)$, the decision boundary is defined by 
\begin{equation*}
\frac{\sqrt{(x_1-x)^2+(y_1-y)^2}}{w_1} = \frac{\sqrt{(x_2-x)^2+(y_2-y)^2}}{w_2}.
\end{equation*}
This can be simplified to the standard equation of a circle
\begin{eqnarray*}
\left(x+\frac{w_{1}^{2}x_{2}-w_{2}^{2}x_{1}}{w_{2}^{2}-w_{1}^{2}} 
\right)^2
+\left(y+\frac{w_{1}^{2}y_{2}-w_{2}^{2}y_{1}}{w_{2}^{2}-w_{1}^2}\right)^{2}
& = &
\frac{w_{1}^{2}y_{2}^{2}+w_{1}^{2}x_{2}^{2}-w_{2}^{2}y_{1}^{2}-w_{2}^{2}x_{1}^{2}}
{w_{2}^{2}-w_{1}^{2}}
\\
& & 
+\left( \frac{w_{1}^{2}x_{2}-w_{2}^{2}x_{1}}{w_{2}^{2}-w_{1}^2}\right)^2
+\left(\frac{w_{1}^{2}y_{2}-w_{2}^{2}y_{1}}{w_{2}^{2}-w_{1}^{2}}\right)^2.
\end{eqnarray*}
See Figure \ref{fig:nn-wnn}. The fact that WNN induces a circular boundary (and in higher dimension, a ball separator) will be useful for our proofs and constructions.

\begin{figure}
\centering
\begin{subfigure}{0.5\linewidth}
\centering
\includegraphics[width=.9\linewidth]{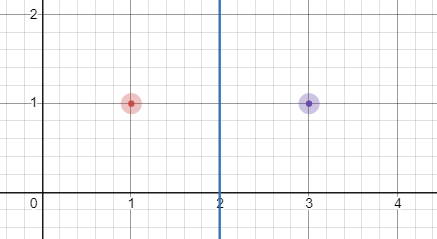}
\end{subfigure}%
\begin{subfigure}{0.5\linewidth}
\centering
\includegraphics[width=.9\linewidth]{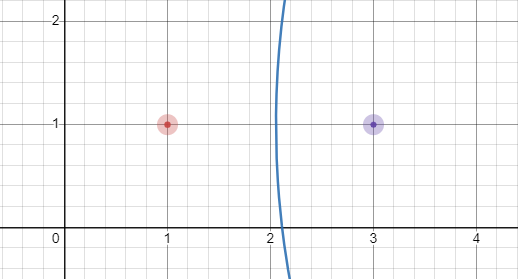}
\end{subfigure}
 \caption{Illustration of decision boundary for equal (left) and unequal weights (right). In the right figure, $w(p_1)=1.5$ and $w(p_2) = 1$, where $p_1$ is its left point and $p_2$ is its right point.}
\label{fig:nn-wnn}
\end{figure}

While the distance function for nearest neighbor condensing rules is often taken to be the Euclidean norm, all our results below hold for all metric distance functions.

\section{Properties of weighted distance nearest neighbor}\label{sec:properties}

In this section, we establish condensing properties under WNN, especially in comparison to condensing properties already known for the regular unweighted nearest neighbor  (NN) rule. We show that the former rule is strictly more powerful: It can always yield condensing as good as the latter, and in some cases better by a factor of $\Theta(n)$. At the same time, we derive generalization bounds for WNN condensing that are essentially the same as those known for NN condensing, so that the utilization of a more powerful tool does not lead to an increase in generalization error.

\subsection{Condensing bounds}\label{sec:condensing-bounds}

As weighted nearest neighbor generalizes unweighted nearest neighbor, it can only improve the condensing. We can in fact show the following:

\begin{theorem}\label{thm:condensing}
For any $n$, there exists an $n$-point data-set that can be condensed to 2 candidates under the WNN rule, but requires $\Theta(n)$ candidates under the NN rule.
\end{theorem}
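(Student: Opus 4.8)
\emph{Proof idea.} The plan is to exhibit one Euclidean point set that is at the same time ball-separable (hence condensable to two weighted points) and locally alternating in label (hence hard for the unweighted rule). Fix $\rho>0$ and a small $\eps>0$ to be chosen last. Place $m:=n-2$ points $z_0,\dots,z_{m-1}$ at the angles $2\pi k/m$ around the origin, putting $z_k$ at radius $\rho-\eps$ with label $+$ when $k$ is even and at radius $\rho+\eps$ with label $-$ when $k$ is odd (take $n$ even, so $m$ is even and the labels close up consistently around the cycle; the odd case costs one extra point). Add two ``anchor'' points on the positive $x$-axis: a light point $q$ at distance $\alpha:=\rho/W$ labelled $+$ and a heavy point $q'$ at distance $\beta:=\rho W$ labelled $-$, for a fixed constant $W>1$ (say $W=10$), with weights $\w(q)=1$ and $\w(q')=W$.

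For the WNN upper bound I take $\dsc=\{q,q'\}$ with the weights above. By the computation preceding Figure~\ref{fig:nn-wnn}, the WNN boundary between two points of unequal weight is an Apollonius circle; with $\alpha\beta=\rho^2$ and the stated weights this circle is precisely $\{x:|x|=\rho\}$, the lighter point $q$ owning the open disk of radius $\rho$ and $q'$ owning its exterior. Every $+$ point then lies at radius $\rho-\eps<\rho$ (or, for $q$ itself, at radius $\rho/W<\rho$) and is classified $+$; every $-$ point lies at radius $\rho+\eps>\rho$ (or, for $q'$, at $\rho W>\rho$) and is classified $-$. Hence $(\dsc,\w)$ is consistent with the whole sample.

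For the NN lower bound I choose $\eps$ small enough (e.g.\ $\eps\le c\rho/m$ for a small absolute constant $c$) that, for every circle point $z_i$, a law-of-cosines estimate gives $d(z_i,z_{i\pm1})\approx 2\pi\rho/m$ while $d(z_i,z_{i\pm2})\approx 4\pi\rho/m$ and $d(z_i,q),d(z_i,q')=\Omega(\rho)$; thus the only two nearest neighbours of $z_i$ in the whole sample are the cyclically adjacent points $z_{i-1},z_{i+1}$, which carry the label opposite to $z_i$. Consequently, for any consistent condensed set $\dsc$: if $z_i\notin\dsc$ then neither $z_{i-1}$ nor $z_{i+1}$ lies in $\dsc$, for otherwise the nearest point of $\dsc$ to $z_i$ would be an oppositely-labelled neighbour (a tie, if both neighbours are present, being resolved either way against $z_i$). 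Viewing this condition on the cycle $\mathbb{Z}/m$, the set of circle indices contained in $\dsc$ is closed under taking neighbours, hence is either empty or all of $\mathbb{Z}/m$. The empty case cannot occur: then $\dsc\subseteq\{q,q'\}$, but a direct distance comparison shows every circle point is strictly closer to $q$ than to $q'$, so such a $\dsc$ would label all circle points $+$ and misclassify every $-$ point. Therefore $\dsc$ must contain all $m=n-2$ circle points; together with the trivial fact that the whole sample is NN-consistent, the minimum NN-consistent subset has size $\Theta(n)$.

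The one point needing care is the final choice of parameters: $\eps$ must be small compared with the inter-point spacing $\Theta(\rho/m)$ so that the cyclic neighbours are the strict nearest neighbours (this drives the lower bound), while the separating ball must lie in the radial band $(\rho-\eps,\rho+\eps)$ populated by the two label classes (this drives the upper bound). Since $q,q'$ are placed so that the ball has radius exactly $\rho$, the band condition holds for every $\eps>0$; thus any sufficiently small $\eps$ works and the two requirements are compatible, the remaining estimates being routine planar geometry.
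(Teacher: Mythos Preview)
Your construction is correct and is a genuinely different route from the paper's. The paper does not argue directly: it first introduces the \emph{ball cover} (BC) rule, observes that WNN generalizes BC, and then proves Lemma~\ref{lem:NN-BC}, whose second item furnishes the hard instance --- two parallel columns of alternating labels plus two far-away anchors, separated by two large balls (hence two WNN points via the BC-to-WNN reduction), while a propagation argument along the columns forces the unweighted condensed set to contain all $n-2$ column points. Your argument bypasses the BC intermediary entirely: you place the alternating points on a circle and pick the two anchor weights so that the Apollonius boundary is exactly that circle, giving the WNN upper bound directly; the NN lower bound is then the same style of propagation argument, but on the cycle $\mathbb{Z}/m$ rather than on a line. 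What the paper's detour buys is Lemma~\ref{lem:NN-BC} itself (the incomparability of NN and BC), which motivates their heuristic in Section~\ref{sec:heuristic}; what your approach buys is a shorter, self-contained proof of Theorem~\ref{thm:condensing} that needs no auxiliary rule. The only care point, which you flag, is that $\eps$ must be taken $O(\rho/m)$ so that each $z_i$'s two cyclic neighbours are strictly its nearest sample points; once that holds, the rest is indeed routine planar geometry.
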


In place of simply presenting a construction achieving the bound of Theorem \ref{thm:condensing}, we will instead present a new condensing rule we call the ball cover (BC) rule, and show that WNN generalizes this rule as well. We then demonstrate that condensing under NN and BC is incomparable, in that each rule admits $n$-point sets that it can condense to a constant size, but which the other cannot condense below $\Theta(n)$ points. This expanded explanation will yield a better understanding of the power of WNN, and motivate the heuristic of Section \ref{sec:heuristic} below.

\paragraph{Ball Cover Rule.}
We define another rule for condensing, the ball cover (BC) rule: Given the input points $S \subset \X$, we must produce a subset $\Tilde{S} \subset \X$ of minimum cardinality, and also assign each point $x_i \in \Tilde{S}$ a radius $r_i$. Let $B_i = B(x_i, r_i)$. We require that for any two points $x_i,x_j \in \Tilde{S}$ with $l(x_i) \ne l(x_j)$, that $r_i,r_j < d(x_i,x_j)$. It follows that no ball contains sample points of the opposite label. The decision rule simply assigns a point $x \in S$ the same label as the center of a ball containing $x$. A valid BC condensing satisfies that every $x \in S \setminus \Tilde{S}$ is found in a ball $B_i$ satisfying $l(x) = l(x_i)$. (There is however a caveat relating to the labeling of points not in the sample: These points can fall into multiple balls, or no ball at all. In these cases, one may impose some arbitrary decision rule, such as a priority over the balls.)

The BC rule is motivated by the NET algorithm of \citet{gottlieb2018near}. This algorithm can be viewed as covering the space with balls of equal radius. The BC rule is an extension of the NET approach to balls of different radii.

It is easy to show that WNN generalizes the BC rule: Given a condensed set $\Tilde{S}$ with assigned radii $r_i$, a WNN classifier can be produced by taking the same points of $\Tilde{S}$, and assigning weight $w(x_i) = r_i$ for all $x_i \in \Tilde{S}$. Consider any point $x \in S \setminus \Tilde{S}$ falling in $B_i$ but not in $B_j$, and we have that 
$ \distw(x,x_i)  = \frac{d(x,x_i)}{w(x_i)}
 < \frac{r_i}{r_i}
 = 1,$
while
$ \distw(x,x_j)  = \frac{d(x,x_j)}{w(x_j)}
 \ge \frac{r_j}{r_j}
 = 1,$ 
 so that the WNN rule is indeed consistent with the BC rule on the sample.

\paragraph{Comparison between NN and BC.}
It remains to prove the following lemma, concerning condensing under the NN and BC rules:

\begin{lemma}\label{lem:NN-BC}
For any $n$, there exists an $n$-point data-set that can be condensed to 2 candidates under the NN rule, but requires $\Theta(n)$ candidates under the BC rule.

Likewise, for any $n$, there exists an $n$-point data-set that can be condensed to 2 candidates under the BC rule, but requires $\Theta(n)$ candidates under the NN rule.
\end{lemma}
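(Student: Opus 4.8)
\emph{Overview.} I would prove each direction with an explicit configuration, using two easy facts: (i) NN can condense to two points whenever some perpendicular bisector of a pair of sample points has all positives on one side and all negatives on the other; (ii) BC can condense to two points whenever all positives lie in one ball containing no negative sample point and all negatives lie in a second ball containing no positive sample point (the centers may be arbitrary points of $\X$, as the definition permits).

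\emph{NN condenses to $2$, BC needs $\Theta(n)$.} Take the data-set to be the $n=2m$ points $p_i=(i,-\tfrac14)$ (label $+$) and $q_i=(i,\tfrac14)$ (label $-$), $i=1,\dots,m$, regarded as a metric space in their own right under the Euclidean distance, so that any ball center is one of the $n$ points. For NN, the pair $\{p_1,q_1\}$ works: their bisector is the line $y=0$, every $p_i$ has $y=-\tfrac14$ and every $q_i$ has $y=\tfrac14$. For BC, the closest opposite-label point to $p_i$ is $q_i$, at distance $\tfrac12$, whereas every other positive is at distance $\ge 1$; hence a ball labelled $+$ that covers $p_i$ has radius at most $\tfrac12$ and therefore contains no positive point besides its own center. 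Symmetrically for the $q_i$. So every sample point must be the center of its own ball, i.e.\ $|\tilde S|=n=\Theta(n)$.

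\emph{BC condenses to $2$, NN needs $\Theta(n)$.} In $\mathbb{R}^2$ place two horizontally offset rows just above one another: $a_i=(2i,0)$ (label $+$) and $b_i=(2i+1,-\delta)$ (label $-$), $i=1,\dots,m$, with $\delta>0$ tiny; so $n=2m$. The line $y=-\delta/2$ strictly separates the classes, so BC condenses to $2$: take one huge ball centered far above the rows whose boundary, near the data, lies between the two rows (its interior then contains all $a_i$ and no $b_i$), and a symmetric huge ball centered far below for the $b_i$; the two centers are far apart, so the constraint $r_i<d(x_i,x_j)$ holds. For NN, the two closest points to $a_i$ are $b_{i-1}$ and $b_i$, at distance $\sqrt{1+\delta^2}<2$, while the nearest other positive is at distance $2$; symmetrically, the two closest points to $b_i$ are the positives $a_i$ and $a_{i+1}$. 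Hence if $a_i\notin\tilde S$ then $b_{i-1},b_i\notin\tilde S$ (otherwise the nearest retained point to $a_i$ would be one of them and mislabel it), and if $b_i\notin\tilde S$ then $a_i,a_{i+1}\notin\tilde S$. Propagating these implications forward and backward from any single omitted point forces $\tilde S=\emptyset$, which cannot be consistent; hence $\tilde S=S$ and NN needs all $n$ points.

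\emph{Where the difficulty is.} The first construction falls out immediately from the idea of tightly coupled opposite-label pairs. The crux is the second: NN-hardness wants each point to have an opposite-label point as its unique nearest neighbour, whereas a two-ball cover wants the classes to be ball-separable, and these two goals pull against each other. The resolution is to make the data linearly separable by a hyperplane that is deliberately not the bisector of any sample pair (here a horizontal line between two parallel but horizontally shifted rows): BC still covers it with two enormous near-halfspace balls, while NN gains no advantage and the chaining argument goes through. A minor point to check is that the chain reaches the boundary indices $i\in\{1,m\}$; it does, and in any case the $O(1)$ end effects do not change the $\Theta(n)$ bound.
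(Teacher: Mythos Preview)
Your first construction is essentially the paper's: two parallel rows whose vertical gap is smaller than the horizontal spacing, so the bisector of any aligned pair separates the classes for NN, while every admissible BC ball is pinned to a single point.

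The second construction diverges from the paper in one respect that matters. You place the two BC centers ``far above'' and ``far below'' the rows, outside the sample, citing the clause $\tilde S\subset\X$ in the definition. The paper instead adjoins the two centers as extra \emph{sample} points (at carefully computed coordinates $r=(-t,\gamma+1)$ and $b=(2t,\gamma+1)$ with $t=(\gamma+1)^2/2$), so that $\tilde S\subset S$. This is not cosmetic: the only purpose of the lemma is to yield Theorem~\ref{thm:condensing} via the reduction ``WNN generalizes BC by reusing the same $\tilde S$,'' and that reduction requires the BC set to be a subset of the sample. Read that way, your two off-sample centers are inadmissible, and no two of your $a_i,b_i$ can serve instead (any ball around $a_j$ large enough to contain another $a_i$ already contains $b_{j-1}$ or $b_j$). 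Your own first construction implicitly acknowledges this, since there you shrink $\X$ to the sample precisely to force BC centers to be sample points; switching conventions between the two items is what lets the second one through.

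The fix is exactly what the paper does: add the two far-away centers to the sample. But then you owe two further checks on the NN side: that the new pair does not itself give a two-point NN condensing, and that their presence does not let the chaining argument leak (e.g.\ by providing a same-label neighbor close enough to some row point to block the implication). The paper's specific placement is tuned so both checks pass; your sketch does not touch this, and it is the only part of the second item that needs real care. Your chaining argument for the row points themselves is correct and is the same mechanism the paper uses.
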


As WNN generalizes BC, Theorem \ref{thm:condensing} follows immediately from Lemma \ref{lem:NN-BC}. It remains to prove the lemma.

\begin{proof}[Proof of Lemma \ref{lem:NN-BC}]
For the first item of the lemma,
assume without loss of generality that $n$ is even, and set $\gamma = n/2$.
Our example set $\X$ will have $\gamma$ red points and $\gamma$ blue points, with a diameter $\Theta(n)$ and minimum distance of $1$ between the points. Let $c = \frac{1}{2}$, and add to the set $\gamma$ red points at positions 
$\{(i,c) : i \in [\gamma]\}$, 
and $\gamma$ blue points at positions
$\{(i,-c) : i \in [\gamma]\}$.
See Figure \ref{fig:NN-BC}.

Now under the NN rule, there exists a solution which uses only two points: This solution $S$ takes the red point at $(1,c)$ together with the blue point at $(1,-c )$: Take any red point at $(i,c)$, and its distance from the red candidate is $i-1$, while its distance from the blue candidate is exactly
$\sqrt{(i-1)^2 + 4c^2} > i-1$. A similar argument applies to the blue points, and so we conclude that $S$ is a consistent condensing of $\X$.

Turning to the BC rule, we show that any solution under this rule must have all $n$ points: Assume by contradiction that we can consistently cover all the points of $\X$ with only $k$ balls, where $k<n$. This implies that there exists some solution ball which covers 2 or more points of the same color; assume without loss of generality that this ball is red, and its center is $p_i = (i,c)$. As this ball covers more than a single red point, its radius must be greater than $1$, but then it contains the blue point $(i, -c)$, which is forbidden. We conclude that no ball contains more than one point, and so the optimal solution under BC must contain exactly $n$ points.

\begin{figure}
\centering
\begin{subfigure}{0.5\linewidth}
\centering
\includegraphics[width=0.9\linewidth]{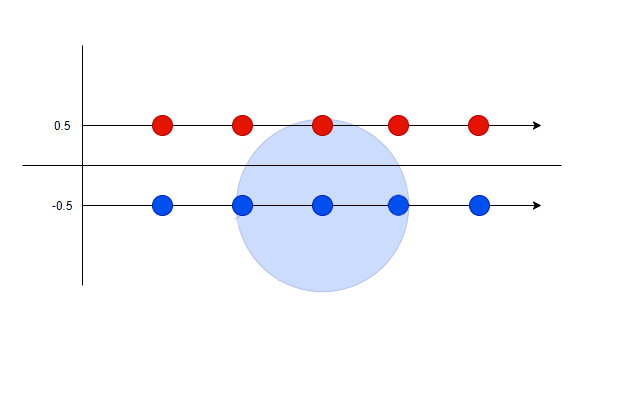}
\end{subfigure}%
\begin{subfigure}{0.5\linewidth}
\centering
\includegraphics[width=0.9\linewidth]{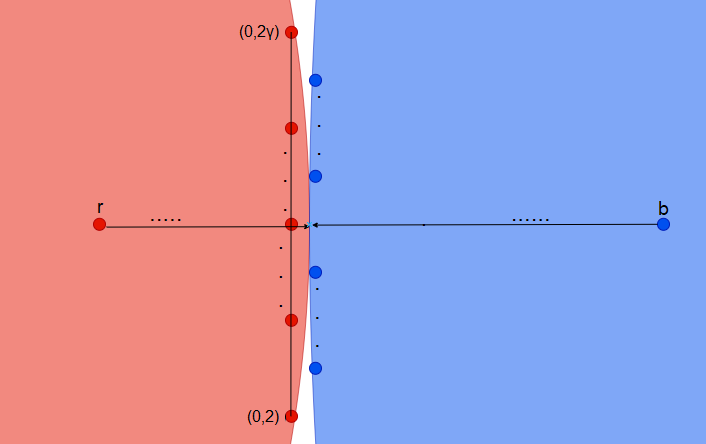}
\end{subfigure}
 \caption{Left: A set that admits good condensing under the NN rule, but not under the BC rule. Right: A set that admits good condensing under the BC rule, but not under the NN rule.}
\label{fig:NN-BC}
\end{figure}

For the second item of the lemma,
assume without loss of generality that $n$ is odd, set $\gamma = \frac{n-1}{2}$, and further assume that $\gamma$ is odd. Our example has $\gamma$ blue points and $\gamma + 1$ red points: Let the points $\{(0,2i) : i \in [\gamma]\}$ be red, and the points $\{(1,2i + 1) : i \in [\gamma - 1]\}$ be blue. Set $t=\frac{(\gamma+1)^2}{2}$, and add an additional red point  $r = (-t,\gamma+1)$, and an additional blue point $b = (2t,\gamma+1)$. See Figure \ref{fig:NN-BC}.

We show that under the BC rule, two balls are sufficient to correctly label all points: First take the red ball centered at $r$ with radius 
$\sqrt{t^2 + (\gamma + 1)^2} = \sqrt{t^2 + 2t} < t+1$. Clearly this ball contains no blue points, since the line containing the blue points is at distance exactly $t+1$ from $r$. But the ball contains all the red points, as the farthest red points from $r$ are at $(0,2)$ and $(0,2\gamma)$, both of which are at distance $\sqrt{t^2 + (\gamma-1)^2}$ from the ball center, and hence inside the ball.
Now take the blue ball centered at $b$ with radius $\sqrt{4t^2 + (\gamma + 1)^2} = \sqrt{(2t)^2 + 2t} < 2t+1$. A similar argument to that above shows that this ball contains all blue points, but none of the red.

We show that under the NN rule, at least $n-2$ points must be found in the condensed set. First take the blue points on the line $x=1$, and at least one of these must be in the condensed set: If only $b$ were in the condensed set, then the other blue points would be closer to every red point in the condensed set. Now take a blue point $(1,2i + 1)$ in the condensed set, and it must be that the red points $(0,2i),(0,2i+2))$ are both in the condensed set, since the blue point is their nearest neighbor. It similarly follows that blue points $(1,2i-1),(1,2i+3)$ are both in the condensed, and in fact that all red points on the line $x=0$ and all blue points on the line $x=1$ are in the condensed set.
\end{proof}

\subsection{Learning bounds}
\label{learningbounds}
Here we establish uniform generalization bounds for WNN rules. By definition, a general WNN classifier is uniquely determined by a subset of labeled samples $\dsc$ and a weight function $\w$.
In addition, since $\w(x) = 1$ for all $x\notin \dsc$, $\w$ is uniquely determined by its values on $\dsc$.
The following representation lemma establishes a $2|\dsc|$-sample compression scheme for the class of WNN rules, which will allow us to derive compression-based error bounds \citep{warmuth86,floyd1995sample,graepel2005pac}.

\newcommand{\cons}{\mathcal C}
\begin{lemma}
\label{lem:representer}
There exist an 
encoding function $\cons$ and a reconstruction function $\recon$ that satisfy the following:
For any finite labeled set $\ds$, any subset $\dsc\subset\ds$, and any weight assignment $\w$ for $\dsc$, if the WNN classifier corresponding to the pair $(\dsc,\w)$ is consistent on $\ds$, then $\cons(\ds,\dsc,\w)$ returns two ordered labeled subsets $\dsc',\wset'\subset\ds$, each of size $|\dsc|$, such that the WNN classifier $\cl_{\dsc',\wset'}=\recon(\dsc',\wset')$ is consistent on $\ds$.
\end{lemma}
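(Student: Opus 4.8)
The plan is to build the $2|\dsc|$-point compression scheme explicitly. Since $\dsc\subset\ds$, the first half of the compression is free: put $\dsc':=\dsc$, which records the $|\dsc|$ centers together with their labels. Everything else is about encoding the weight vector $\w$ using $|\dsc|$ further sample points. First I would remove the disjunction hidden in ``consistency'': because $\cl_{(\dsc,\w)}$ is consistent, for every $x\in\ds$ we may fix a \emph{responsible center} $c(x)\in\dsc$ with $l(y_{c(x)})=l(x)$ that strictly beats every oppositely labelled center at $x$, i.e.\ $\distw(y_{c(x)},x)<\distw(y_j,x)$ whenever $l(y_j)\ne l(x)$ (for $x\in\dsc$ take $c(x)=x$). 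Passing to logarithmic coordinates $u_i:=\log\w(y_i)$, the requirement ``$\cl$ is consistent via the assignment $c$'' becomes a system of difference inequalities
\[
u_j-u_{c(x)}\;<\;a_{x,j}:=\log\frac{d(x,y_j)}{d(x,y_{c(x)})},\qquad x\in\ds\setminus\dsc,\ \ l(y_j)\ne l(x),
\]
which the given weights satisfy strictly (the inequalities coming from $x\in\dsc$ are vacuous, since then $d(x,y_{c(x)})=0$), and whose constants $a_{x,j}$ depend only on the points $x,y_{c(x)},y_j$ of $\ds$.

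The next step is to pick, among all weight vectors satisfying this system, a canonical one that is both robust and cheap to describe; I would do this with the linear program: maximise a margin variable $t$ subject to $u_j-u_{c(x)}+t\le a_{x,j}$ for all relevant $(x,j)$, together with $u_1=0$ (to kill the scaling freedom $u\mapsto u+c\mathbf 1$) and $t\le 1$ (to keep the program bounded in degenerate, perfectly separable cases). The given weights certify a feasible point with $t>0$, so the optimal value $t^*\in(0,1]$ is attained, and \emph{any} optimal $(\hat u,t^*)$ satisfies $\hat u_j-\hat u_{c(x)}\le a_{x,j}-t^*<a_{x,j}$ for every $x\in\ds$ --- hence the WNN classifier with weights $\w'(y_i):=e^{\hat u_i}$ is strictly consistent on all of $\ds$. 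After replacing the objective by a lexicographic refinement whose leading directions span $\mathbb R^{|\dsc|}$, this optimum is a unique vertex, and by the standard basis/Carath\'eodory argument it is the unique optimum of the subprogram obtained by keeping only some set $B$ of at most $|\dsc|$ of the inequality constraints; in particular $\hat u$ is pinned down by the at most $|\dsc|$ sample points $x$ occurring in $B$. The encoder $\cons$ therefore outputs $\dsc'=\dsc$ and lets $\wset'$ be these $\le|\dsc|$ witness points (padded to size $|\dsc|$ by an arbitrary sample point), recording in the orderings of the two $|\dsc|$-tuples the small amount of bookkeeping the decoder needs: which witness realises which constraint, and the indices $c(x),j\in[|\dsc|]$ attached to it (this is $O(|\dsc|\log|\dsc|)$ bits, which the two orderings supply). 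Given $(\dsc',\wset')$, the decoder $\recon$ recomputes the constants $a_{x,j}$ from $\dsc'$, rebuilds the subprogram with constraint set $B$, solves it, and returns the WNN classifier given by its unique optimum, which is exactly $\w'$.

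The main obstacle is bookkeeping and degeneracy rather than any deep inequality: one must (i) make the assignment $c$ and the constraint-indices attached to the witness points available to the decoder --- this is where having two \emph{ordered} labelled subsets (equivalently, the short side-message that general sample-compression schemes permit) is essential; and (ii) handle ties, both in the original notion of consistency (handled by insisting on strict inequalities / a positive margin $t^*$) and in the linear program itself (handled by the lexicographic perturbation), so that encoder and decoder agree on the same $\hat u$. It is also worth isolating the trivial extreme case: if one label contributes no constraints at all (e.g.\ $\dsc=\ds$), the cap $t\le 1$ makes the program well-posed and the all-ones weight vector --- the plain nearest-neighbour rule on $\dsc$ --- is already consistent, so nothing beyond $\dsc$ itself needs to be encoded.
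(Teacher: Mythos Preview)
Your approach differs substantially from the paper's. The paper builds $\dsc'$ and $\wset'$ by an iterative weight-inflation procedure: starting from the heaviest center, it multiplicatively raises the weights of all not-yet-processed centers until either one of them reaches the weight of an already-processed center, or some sample point becomes weighted-equidistant between its nearest processed same-label center and some unprocessed opposite-label center; in either event one new center enters $\dsc'$ and one witness enters $\wset'$. The crucial feature is that the $k$-th witness, together with the first $k{-}1$ centers and their already-fixed weights, determines the $k$-th weight by an explicit formula, so the positional matching between $\dsc'$ and $\wset'$ is \emph{all} the side-information the decoder needs. Your LP-vertex argument is more conceptual---a vertex in $m$ free variables is pinned down by $m$ tight constraints, hence by at most $m$ witness points---but it pushes the entire difficulty into the bookkeeping step.

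That step has a genuine gap. Each tight constraint is indexed by a triple $(x,c(x),j)$; you send the witnesses $x$ in $\wset'$ and propose to encode the pairs $(c(x),j)$ in ``the orderings of the two $|\dsc|$-tuples'', arguing that both the bookkeeping and the orderings are $O(m\log m)$ bits. But matching $O(\cdot)$ to $O(\cdot)$ does not yield an encoding: two orderings of $m$-element lists carry at most $2\log_2(m!) \approx 2m\log_2 m - 2m\log_2 e$ bits, whereas specifying $m$ pairs $(c(x),j)\in[m_+]\times[m_-]$ can require up to $m\log_2(m_+m_-)\le 2m\log_2 m - 2m$ bits, leaving a deficit of roughly $2m(\log_2 e-1)\approx 0.88m$ bits that does not vanish as $m$ grows. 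Since the decoder has no access to the original $c(\cdot)$ (which was chosen using the unknown weights $\w$), it cannot reconstruct the tight constraints from the witnesses alone, and your scheme as written does not deliver a well-defined $\recon$. The paper's iterative construction sidesteps this entirely, because there the $k$-th weight is recovered from the $k$-th witness without ever needing to transmit which $(c(x),j)$ pair was in play. A smaller, easily-repaired point: the vertex $\hat u$ is the unique solution of the linear \emph{system} given by the tight constraints as equalities; it need not be the optimum of the LP restricted to those constraints as inequalities (that restricted LP can be unbounded), so the decoder should solve the system of equations rather than ``the subprogram''.
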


\begin{proof}
We first describe the encoding function $\cons$ that given $\ds$ and the pair $(\dsc,\w)$, selects $\dsc',\wset'$. Then we will describe the reconstruction function $\recon$ and conclude that $
\recon(\dsc',\wset')$ is indeed consistent on $\ds$.

Given $\ds$ and $(\dsc,\w)$, the encoding function $\cons$ 
returns two ordered lists $\dsc',\wset'\subset \ds$ such that $\dsc'$ has the same sample points in $\dsc$ but in a specific order, and $\wset'$ consists of sample points from $\ds$ from which appropriate weights for $\dsc'$ can be deduced via $\recon$.
The first sample in the list $\dsc'$ corresponds to $x\in\dsc$ having the maximal weight, and 
the first sample in $\wset'$ is also taken as $x$. Subsequent points are added to $\dsc'$ and $\wset'$ by the following procedure: Starting with initial weights as determined by $\w$, multiplicatively increase the weights of all points of $\dsc$ not yet placed in $\dsc'$, until one of the following occurs:
\begin{itemize}[noitemsep,leftmargin=*]
\item[{(i)}] A point $x \in \dsc\setminus\dsc'$ has weight equal to that of a point already in $\dsc'$, say $x_1$. Then $x$ is added to $\dsc'$ and $x_1$ is added to $\wset'$. 
\item[{(ii)}] There is some point $x \in S \setminus \Tilde{S}$ that became equidistant (under weighted distance) from its closest point $x_1 \in \Tilde{S}$ of the same label, and its closest point $x_2 \in \Tilde{S}$ with opposite label. It must be that $x_1$ is already in $\Tilde{S'}$, or else the weights of $x_1,x_2$ would increase in unison.
Then $x_2$ is added to $\dsc'$ and $x$ is added to $\wset'$.
\end{itemize}
The weight increase procedure is carried on until all samples of $\dsc$ have been placed in $\dsc'$.

As for the reconstruction, given two lists of sample point-label pairs $\dsc'=( (x_1,y_1),\dots,(x_m,y_m))$ and $\wset'=((x'_1,y'_1), \dots, (x'_m,y'_m))$ that have been computed by $\cons$ as described above, the reconstruction function $\recon$ construct a WNN classifier corresponding to the pair $(\dsc',\w')$, where the weight assignment $\w'$ is computed as follows. The weight of the first point in $\dsc'$ is set to $\w'(x_1)=1$. The weights of subsequent points in $\dsc'$ are set depending on their corresponding points in $\wset'$: For $k>1$,
\begin{itemize}[noitemsep,leftmargin=*]
    \item If $x_k$ appears in $(x_1,\dots,x_{k-1})$, say as $x_j$, then we are in case (i), and thus put $\w'(x_k) = \w'(x_j)$.
    
    \item If $x_k$ does not appear in $(x_1,\dots,x_{k-1})$, then we are in case (ii), and $(x'_k,y'_k)$ corresponds to a witness for $(x_k,y_k)$ and some other point in $(x_1,\dots,x_{k-1})$, say $(x_j,y_j)$. The identity of $x_j$ can be inferred by finding the point in $(x_1,\dots,x_{k-1})$ with label opposite to $y_k$ and having the minimal weighted distance to $x'_k$,
    \begin{align*}
        x_j = \argminn_{x_i \in \{x_1,\dots,x_{k-1}\}: y_i\neq y_k} \left\{ \frac{\dist(x'_k,x_i)}{\w'(x_i)}\right\},
    \end{align*}
    breaking ties towards $x_i$ with smallest index $i$.
    Then $\w'(x_k)$ is set to satisfy the equation 
    \begin{align*}
        \frac{\dist(x'_k,x_k)}{\w'(x_k)} = \frac{\dist(x'_k,x_j)}{\w'(x_j)} .
    \end{align*}    
\end{itemize}

Since multiplying the weights of several points in unison do not change their pairwise weighted-distance boundaries, it is clear that during the whole construction process done in $\cons$, the classifier remains consistent on $\ds$, provided ties in weighted distances are decided in favor of points in $\dsc'$ of smaller index. Hence $\recon(\dsc',\wset')$ is consistent on $\ds$.
\end{proof}

Lemma \ref{lem:representer} can be used to derive generalization bounds, as we show in Theorem \ref{thm:generalization bound}. But note first that the reconstruction function $\recon$ of Lemma \ref{lem:representer} heavily relies on that $\dsc'$ and $\wset'$ are ordered.
Below in Section \ref{sec:heuristic} we will consider a subclass of WNNs whose $\recon$ assumes no such matching is given, and this matching needs to be deduced. In this case a tighter generalization bound holds, which we will leverage to establish Bayes consistency for the aforementioned subclass in Section \ref{sec:Bayes consistency}.
Formally, a reconstruction function $\recon$ is said to be \emph{permutation invariant} if for any two arbitrary permutations $\sigma_1,\sigma_2$ of the samples in $\dsc'$ and $\wset'$ respectively, 
\begin{align}
\label{eq:permutation equality}
    \recon(\sigma_1(\dsc'),\sigma_2(\wset')) = \recon(\dsc',\wset'). 
\end{align}
In other words, a permutation invariant $\recon$ is able to deduce the matching between the samples in $\dsc'$ and their weights from the unordered elements of $\dsc'$ and $\wset'$ alone. 

We can now present the generalization bounds. These essentially correspond to sample compression-based error bounds for the compression scheme established in Lemma \ref{lem:representer}. Similar finite-sample bounds (up to constants) for standard NN rules have been established in \citet{DBLP:journals/tit/GottliebKK14+colt,DBLP:conf/nips/KontorovichSW17,HKSW}; in particular, leveraging the fact that such bounds are dimension free, \citet{HKSW} derived similar bounds to establish universal Bayes consistency of a nearest neighbor rule (OptiNet) in any essentially-separable metric space.
 
\begin{theorem}
\label{thm:generalization bound}
For any probability distribution of $x$, any labeling function $l:\X\to\{-1,1\}$, and any $n\in\mathbb N$ and $0<\delta<1$, it holds that with probability $1-\delta$ over the i.i.d.\ labeled sample $S = \left\{(x_1,l(x_1)),\dots,(x_n,l(x_n))\right\}$, for any $\dsc\subset \ds$ and weight assignment $\w$ for $\dsc$, if the WNN classifier $\cl_{(\dsc,\w)}$ corresponding to the pair $(\dsc,\w)$ correctly classifies all points in $S$, then
\begin{align}
\label{eq:err_bound}
    \err(\tcl_{(\dsc,\w)}) \leq
    \frac{2}{n-|\dsc|}\left(|\dsc|\log 2n + \log \frac{n}{\delta}\right).
\end{align}
If in addition the reconstruction function $\recon$ is permutation invariant, then
\begin{align}
\label{eq:err_bound_perm}
    \err(\tcl_{(\dsc,\w)}) \leq
    \frac{2}{n-|\dsc|}\left(|\dsc|\log{\left(\frac{2en}{|\dsc|}\right)} + \log \frac{n}{\delta}\right).
\end{align}
\label{thm:err_comp}
\end{theorem}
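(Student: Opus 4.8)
The plan is to read \eqref{eq:err_bound} and \eqref{eq:err_bound_perm} as realizable-case sample-compression generalization bounds for the scheme $(\cons,\recon)$ of Lemma \ref{lem:representer}, via the classical compression-bound argument of \citet{warmuth86,floyd1995sample} (see also \citet{graepel2005pac}). Fix $t\in\{1,\dots,n-1\}$ and restrict attention to pairs $(\dsc,\w)$ with $|\dsc|=t$ whose WNN classifier is consistent on $\ds$. By Lemma \ref{lem:representer}, every such classifier equals $\recon(\dsc',\wset')$ for some ordered lists $\dsc',\wset'\subset\ds$ of length $t$, and the reconstructed classifier is itself consistent on $\ds$. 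The key structural point is that $\recon(\dsc',\wset')$ depends only on the (at most) $2t$ sample points appearing in those lists; hence, for a \emph{fixed} such pair of lists, the reconstructed classifier is independent of the remaining $\ge n-2t$ samples, so if its true error were to exceed $\eps$ then the probability it is nonetheless consistent on those fresh samples is at most $(1-\eps)^{n-2t}\le e^{-\eps(n-2t)}$.

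Let $N_t$ denote the number of distinct codewords $(\dsc',\wset')$ the encoder can output when $|\dsc|=t$, so $\recon$ produces at most $N_t$ classifiers at ``complexity level'' $t$. A union bound over these codewords gives $\PROB[\exists\,(\dsc,\w):|\dsc|=t,\ \text{consistent on }\ds,\ \err>\eps_t]\le N_t\, e^{-\eps_t(n-2t)}$; equating this with $\delta/n$ and then taking a union over $t\in\{1,\dots,n-1\}$ yields, with probability $1-\delta$ simultaneously for all $\dsc,\w$,
\[
\err(\tcl_{(\dsc,\w)})\ \le\ \frac{1}{n-2|\dsc|}\left(\log N_{|\dsc|}+\log\frac{n}{\delta}\right).
\]
It remains to (i) bound $N_t$ and (ii) tidy the prefactor. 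For (ii): if $|\dsc|\ge n/3$ the claimed right-hand sides already exceed $1$ and there is nothing to prove, while for $|\dsc|\le n/3$ one has $n-2|\dsc|\ge\tfrac12(n-|\dsc|)$, which upgrades $\tfrac{1}{n-2|\dsc|}$ to $\tfrac{2}{n-|\dsc|}$. For (i): the list $\dsc'$ is an ordered selection of $t$ sample points, a factor $\le n^t$ (and $\le\binom{n}{t}$ when order may be forgotten); the delicate part is to absorb $\wset'$ without paying another full factor $n^t$, using that — by the case~(i)/case~(ii) rule defining $\cons$ — each entry of $\wset'$ is either a point already listed in $\dsc'$ or a point pinned down by $\dsc'$ together with the geometry of $\ds$, so $\wset'$ contributes only $O(t)$ further bits. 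This gives $N_t\le (2n)^t$ in general, hence \eqref{eq:err_bound}; and $N_t\le (2en/t)^t$ when $\recon$ is permutation invariant, since by \eqref{eq:permutation equality} the orderings of $\dsc'$ and $\wset'$ are irrelevant and one replaces $n^t$ by $\binom{n}{t}\le (en/t)^t$, hence \eqref{eq:err_bound_perm}.

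The main obstacle is step (i): making the codeword count tight enough. The naive estimate — treating Lemma \ref{lem:representer} as an ordinary $2t$-point compression scheme, so $N_t\le n^{2t}$ (resp.\ $\binom{n}{2t}$) — produces a bound with denominator $n-2|\dsc|$ and exponent $2|\dsc|$, which is strictly weaker than the stated form; obtaining the advertised constants forces one to exploit that the witness list $\wset'$ is not a free $t$-subset of $\ds$ but is heavily constrained by $\dsc'$ and by the sample geometry. Everything else is routine bookkeeping: the realizable deviation bound $(1-\eps)^m\le e^{-\eps m}$, the two nested union bounds (over codewords and over $|\dsc|$), the split of $\delta$ into $n$ equal pieces, and the algebra solving $N_t\, e^{-\eps_t(n-2t)}\le \delta/n$ for $\eps_t$.
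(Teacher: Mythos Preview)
Your skeleton — realizable compression bound, union over codewords, then union over $t=|\dsc|$ — is right, but step (i) has a genuine gap, and the paper does not fill it the way you suggest.

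The problem is the claim $N_t\le(2n)^t$. In a sample-compression union bound one fixes the index tuple $(\bm i,\bm j)$ \emph{before} seeing $\ds$, so that the samples outside $\bm i\cup\bm j$ are fresh and independent of $\recon(\ds(\bm i),\ds(\bm j))$. Which tuples actually lie in the range of $\cons(\ds,\cdot,\cdot)$ is a function of $\ds$; you cannot restrict to that random range and still treat the leftover samples as fresh. Your phrase ``pinned down by $\dsc'$ together with the geometry of $\ds$'' invokes precisely the full sample $\ds$, which is not available at the point where the union bound is taken. With the honest count $N_t\le n^{2t}$ you land at $\tfrac{1}{n-2t}\bigl(2t\log n+\log\tfrac n\delta\bigr)$; your $\tfrac1{n-2t}\to\tfrac2{n-t}$ trick then yields a bound a constant factor weaker than \eqref{eq:err_bound}, and only after the ad hoc case split $|\dsc|\le n/3$.

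The paper uses a different decomposition that sidesteps this entirely. It union-bounds only over the ordered subset $\ds(\bm i)$ (contributing $|\I_{n,m}|$), and then, conditional on $\ds(\bm i)$, handles the continuum of weights by a \emph{growth-function} argument on the class $\mathcal H_{\ds(\bm i)}=\{\cl_{(\ds(\bm i),\w')}:\w'\text{ any weight assignment}\}$. Lemma \ref{lem:representer} is invoked not to compress the weights into further training indices, but to bound $\Pi_{\mathcal H_{\ds(\bm i)}}(2(n-m))$: any labeling of $2(n-m)$ points achievable by some $\w'$ is reproduced by choosing $m$ witnesses from those $2(n-m)$ points together with $\ds(\bm i)$, hence $\Pi\le|\I_{2n,m}|$. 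The standard realizable bound $2\,\Pi\cdot e^{-(n-m)\eps/2}$ then applies with the full $n-m$ samples outside $\ds(\bm i)$ counted as fresh — the witnesses are drawn from the ghost sample rather than charged against the training set — and this is what produces the denominator $n-|\dsc|$ directly, without your case split. So the missing idea in your proposal is exactly this hybrid: compress only the subset, and control the weights via VC/growth-function rather than via a second layer of compression.
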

\begin{proof}[Proof of Theorem \ref{thm:generalization bound}.]
Consider first the case of non-invariant $\recon$. Set $m=|\dsc|$ and let $\I_{n,m}$ denote the set of all (ordered) sequences  of length $m$ of indices from $\{1,2,\dots,n\}$.
For $\bm i = (i_1,\dots,i_m)\in\I_{n,m}$ denote by $\ds(\bm i)$ the subset of $\ds$ with indices in $\bm i$.
For a weight assignment $\w'(\bm i)$ for $\ds(\bm i)$, write $\emperr(\cl_{(\ds(\bm i), \w'(\bm i))})$ for the empirical error of $\cl_{(\ds(\bm i), \w'(\bm i))}$ over the $n-m$ samples in $\ds\setminus \ds(\bm i)$, that is
\begin{align*}
    \frac{1}{n-m} \sum_{(x_j,y_j)\in \ds\setminus \ds(\bm i)}
    \chr{\cl_{(\ds(\bm i), \w'(\bm i))}(x_j) \neq y_j},
\end{align*}
and note that the samples in $\ds\setminus \ds(\bm i)$ are i.i.d.\ and independent of 
$\ds(\bm i)$.
For $\eps>0$, we bound
\begin{align}
\nonumber
 &\PROB\Big\{\emperr(\cl_{(\dsc,\w)})=0
  \,\,\wedge\,\, 
 |\dsc|=m
 \,\,\wedge\,\, 
 \err(\cl_{(\dsc,\w)})>
 \eps
  \Big\}
\\[3pt]
 \nonumber
 \leq  \,\, &
 \PROB\Big\{\exists 
 \bm i\in\I_{n,m}, \exists \w'(\bm i):
 \emperr(\cl_{(\ds(\bm i),\w'(\bm i))})=0
\,\,\wedge\,\,
 \err(\cl_{(\ds(\bm i),\w'(\bm i))})>
 \eps
  \Big\}
 \\[3pt]
 \label{eq:exp_prob}
 \leq  \,\, &
 \sum_{\bm i\in\I_{n,m}}
 \EXP\Big\{
 \PROB\big\{\exists \w'(\bm i):
  \emperr(\cl_{(\ds(\bm i),\w'(\bm i))})=0
\,\,\wedge\,\,
 \err(\cl_{(\ds(\bm i),\w'(\bm i))})> \eps  \mid \ds(\bm i)\big\} \Big\}.
\end{align}

Fix $\bm i\in\I_{n,m}$ and $\ds(\bm i)$, and consider the class of WNN classifiers given by
\begin{align*}
    \mathcal H_{\ds(\bm i)} = \{\cl_{(\ds(\bm i),\w'(\bm i))}: \text{$\w'(\bm i)$ weight assign.\ for $\ds(\bm i)$}\}.
\end{align*}
The growth function $\Pi_{\mathcal H_{\ds(\bm i)}}(n)$
of $\mathcal H_{\ds(\bm i)}$ counts the maximum number of different possible labelings of $n$ points from $\X$:
\begin{align*}
    &\Pi_{\mathcal H_{\ds(\bm i)}}(n)=
    \max_{\{z_1,\dots,z_{n}\}\subset \X}
    \Big|\big\{ (h(z_1),\dots,h(z_{n})) : h\in \mathcal H_{\ds(\bm i)} \big\}\Big|.
\end{align*}
Then by a standard argument \citep{mohri2018foundations},
\begin{align*}
& \PROB\big\{\exists \w'(\bm i):
  \emperr(\cl_{(\ds(\bm i),\w'(\bm i))})=0
\,\,\wedge\,\,
 \err(\cl_{(\ds(\bm i),\w'(\bm i))})> \eps  \mid \ds(\bm i)\big\}
 \\[3pt] &\leq 
 2\Pi_{\mathcal H_{\ds(\bm i)}}(2(n-m)) \cdot e^{-(n-m)\eps/2}.
\end{align*}
To bound $\Pi_{\mathcal H_{\ds(\bm i)}}(2(n-m))$,
note that by Lemma \ref{lem:representer}, for any weight assignment $\w'(\bm i)$ for $\ds(\bm i)$ and any set $\ds'$ of $2(n-m)$ points from $\X$, there exists a subset $\wset'\subset \ds' \cup \dsc(\bm i)$ of size $m$ such that the WNN classifier $\cl_{\ds(\bm i),\wset'}$ gives the same labeling as $\cl_{(\ds(\bm i),\w'(\bm i))}$ on the $2(n-m)$ points in $\ds'$.
Since the number of different classifiers in 
\begin{align*}
\Big\{\cl_{\ds(\bm i),\wset'}: \text{$\wset'$ is a list of $m$ points}
\text{from a sample of size $2n-m$}
\Big\}    
\end{align*}
is at most $|\I_{2n-m,m}| \leq |\I_{2n,m}|$, it follows that 
\begin{align*}
    \Pi_{\mathcal H_{\ds(\bm i)}}(2(n-m)) \leq |\I_{2n,m}|.
\end{align*}
Hence, Eq.\ \eqref{eq:exp_prob} is bounded from above by
\begin{align}
\sum_{\bm i\in\I_{n,m}} |\I_{2n,m}| e^{-(n-m)\eps/2}
\label{eq:exp_I}
\leq
|\I_{n,m}|
|\I_{2n,m}| 
e^{-(n-m)\eps/2}.
\end{align}
Put
\begin{align}
\label{eps_set}
\eps &= \frac{2}{n-m}\left(\log{(|\I_{n,m}|\cdot|\I_{2n,m}|)} + \log \frac{n}{\delta}\right)    
\leq 
\frac{2}{n-m}\left(m\log{2n} + \log \frac{n}{\delta}\right)    
,
\end{align}
where we used the bound $|\I_{n,m}| \leq n^m$. Then the right hand side of \eqref{eq:exp_I} is $\delta/n$.
Summing over the $n$ possible values of $m$ completes the proof for the case of non-invariant $\recon$. 

As for the case of permutation invariant $\recon$, the only difference from the proof above is the definition of $\I_{n,m}$. For the permutation invariant case we take $\I_{n,m}$ to be the set of all (unordered) subsets of $\{1,2,\dots,n\}$ of size $m$.  Then $|\I_{n,m}| \leq {n \choose m} \leq (\frac{en}{m})^m$. Putting this into Eq. \eqref{eps_set}, we have
\begin{align*}
\eps & \leq 
\frac{2}{n-m}\left(m\log{\frac{2en}{m}} + \log \frac{n}{\delta}\right),
\end{align*}
in accordance with \eqref{eq:err_bound_perm}.
\end{proof}


\section{Greedy heuristic for WNN, and its properties}\label{sec:heuristic}

In this section, we suggest a heuristic to produce a weighted condensed set. The heuristic is motivated by the ball cover rule introduced above. After presenting the heuristic, we establish that it is Bayes consistent under mild assumptions.

\subsection{Heuristic}
Our heuristic for WNN condensing is based on a greedy approach for the BC rule, meaning that at each step, we identify a ball covering the maximum number of uncovered points of the same label (and no points of the opposite label), and add it to our ball set. 
Similarly, in our WNN heuristic (see Algorithm \ref{alg:weighted-heuristic}), we iteratively add to the condensed set a point and weight which correspond to the center and radius of the ball covering the largest number of as of yet not covered points. (The equivalence of the radius in the BC rule to weight in the WNN rule was already established above in Section \ref{sec:condensing-bounds}.) 
As in \cite{FM-21}, the notation $d_{\enemy}(x)$ denotes the distance from $x$ to its closest oppositely labeled point in $S$ (its `nearest enemy').

\begin{algorithm}[t]
\caption{Greedy weighted heuristic}\label{alg:weighted-heuristic}
\begin{algorithmic}
\State Input: Point set $S$
\State Initialize solution set $T \gets \emptyset$, $S' \gets S$, weight function $w: S \rightarrow \{1\}$
\While{$S' \ne \emptyset$}
    \State $x \gets \argmax_{x \in S} |B(x,d_{\enemy}(x)) \cap S'| $
    \State $S' \leftarrow S' \setminus B(x,d_{\enemy}(x))$
    \State $T \gets T \cup \{x\}$
    \State $\w(x) \leftarrow d_{\enemy}(x)$
\EndWhile
\State return $T, \w$.
\end{algorithmic}
\end{algorithm}

\subsection{Bayes Consistency}
\label{sec:Bayes consistency}


In the following, we consider a family of WNN classifiers that use a specific (data-dependent) weight function $\w_{\enemy}$,
that assigns to each data point $(\tX,\tY)\in\dsc\subset\ds$ weight equal to the minimal distance from $\tX$ to the points in $\ds$ that have the opposite label from $\tY$, and for points $x \notin \dsc$ assigns 
$\w_{\enemy}(x)=1$; 
that is,
defining $\dsp$ and  $\dsn=\ds\setminus \dsp$ as the split of $\ds$ into positively and negatively labeled points respectively,
$\w_{\enemy}(\tX) = \dist_{\enemy}(\tX),$
where
\begin{align*}
    \dist_{\enemy}(\tX) =
    \begin{cases}
        \dist(\tX,\dsn),& \qquad \tX\in \dscp,
     \\
     \dist(\tX,\dsp),& \qquad \tX\in \dscn.
    \end{cases}
\end{align*}

Note that for this subclass of WNN classifiers there is a simpler compression scheme than that of Lemma \ref{lem:representer}; in particular, the reconstruction function $\recon$ can be made permutation invariant in the sense of \eqref{eq:permutation equality}.
Indeed, the weight assignment $\w_{\enemy}$ for $\dsc$ can be encoded into a subset $\wset\subset\ds\setminus\dsc$ consisting of the nearest enemies of the samples in $\dsc$. Then the weight for $(\tX,\tY)\in\dsc$ can be uniquely determined by splitting $\wset$ into its positively and negatively labeled samples $\wset^+$ and $\wset^-$ and putting 
$
\w_{\enemy}(\tX) = \min_{(x,y)\in \wset^{-\tY}} \dist(\tX,x).
$
With this rule, for any two permutations $\sigma_1,\sigma_2$, $\recon(\sigma_1(\dsc),\sigma_2(\wset)) = \recon(\dsc,\wset)$.

We first consider the (computationally intractable) learning rule that finds the subset $\dsco\subset\ds$ of minimal cardinality such that the classifier $\cl_{(\dsco,\w_{\enemy})}$ is consistent on $\ds$,
\begin{align}
\label{eq:bc_most_comp}
    \dsco = \argminn_{\dsc \subset S}\{|\dsc| : \tcl_{(\dsc,\w_{\enemy})}(x)=y 
    ,\forall (x,y)\in \ds 
    \}.
\end{align}
The following theorem establishes the Bayes consistency of $\tcl_{(\dsco,\w_{\enemy})}$.

\begin{theorem}
    Let $(\X,\dist)$ be a separable metric space and assume $x$ has an atomless distribution and that the labeling function $l$ is countably piece-wise continuous.
    Then, almost surely,
    \begin{align}
    \label{eq:bc_thm}
       \err(h_{(\dsco,\w_{\enemy})}) \xrightarrow[n\to\infty]{} 0. 
    \end{align}
    \label{thm:bc_most_comp}
\end{theorem}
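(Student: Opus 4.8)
# Proof Proposal for Theorem~\ref{thm:bc_most_comp}

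The plan is to combine the compression-based generalization bound of Theorem~\ref{thm:generalization bound} (in its permutation-invariant form \eqref{eq:err_bound_perm}, which applies here since $\w_{\enemy}$ admits a permutation-invariant reconstruction) with a geometric argument showing that the minimal consistent condensed set $\dsco$ has size $|\dsco| = o(n)$ almost surely. Once we know $|\dsco| = o(n)$, plugging $m = |\dsco|$ into \eqref{eq:err_bound_perm} gives an upper bound on $\err(h_{(\dsco,\w_{\enemy})})$ that tends to $0$, and a Borel--Cantelli argument (choosing $\delta = \delta_n$ summable, e.g.\ $\delta_n = n^{-2}$) upgrades the in-probability statement to almost-sure convergence. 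So the real content is the combinatorial/geometric bound on $|\dsco|$.

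\textbf{Bounding $|\dsco|$.} The key step is to produce, for each $n$, \emph{some} consistent weighted condensed set of sublinear size (then $|\dsco|$ is no larger). Here I would exploit separability together with the atomless and countably-piecewise-continuous assumptions. Since $l$ is countably piece-wise continuous, the space decomposes (up to a measure-zero boundary set, which carries no mass because the distribution is atomless — actually one needs the boundary to have measure zero, which should follow from the structure or be part of the ``piece-wise continuous'' definition) into countably many pieces on each of which $l$ is locally constant. Fix $\eps > 0$ and a large radius $R$; restrict attention to a ball of mass $\ge 1-\eps$, which by separability is totally bounded enough for our purposes after discarding another $\eps$ of mass near piece-boundaries. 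On the ``good'' region, every sample point $x$ has $d_{\enemy}(x)$ bounded below by its distance to the nearest label-boundary, so the enemy-balls $B(x, d_{\enemy}(x))$ used in the BC/WNN rule have radius comparable to a fixed $\rho = \rho(\eps)$. A greedy ball-cover (exactly Algorithm~\ref{alg:weighted-heuristic}) over the good region therefore selects at most $N(\rho)$ balls, where $N(\rho)$ is a covering number of the good region — a \emph{constant} independent of $n$. Adding the at-most-$\eps n$ ``bad'' sample points individually to the condensed set yields a consistent solution of size $\le N(\rho) + \eps n$, hence $|\dsco|/n \le \eps + N(\rho)/n \to \eps$ almost surely. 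Since $\eps$ is arbitrary, $|\dsco|/n \to 0$ a.s.

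\textbf{Assembling the pieces.} With $|\dsco| = o(n)$ a.s., write the right-hand side of \eqref{eq:err_bound_perm} as $\frac{2}{n - |\dsco|}\big(|\dsco|\log(2en/|\dsco|) + \log(n/\delta_n)\big)$; since $t \mapsto t\log(2en/t)$ is increasing for $t \le n$ and $|\dsco| = o(n)$, this quantity is $o(1) + O(\log n / n) = o(1)$. Taking $\delta_n = n^{-2}$, the event that the bound fails at stage $n$ has probability $\le n^{-2}$, so by Borel--Cantelli it fails finitely often a.s.; on the complementary event the displayed bound holds for all large $n$ and tends to $0$. Combined with the a.s.\ statement $|\dsco| = o(n)$, this gives $\err(h_{(\dsco,\w_{\enemy})}) \to 0$ a.s., as claimed. (Note $h_{(\dsco,\w_{\enemy})}$ is consistent on $\ds$ by definition of $\dsco$ in \eqref{eq:bc_most_comp}, so Theorem~\ref{thm:generalization bound} does apply.)

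\textbf{Main obstacle.} The delicate point is the geometric argument that the label-boundary region can be made to carry arbitrarily little probability mass while the enemy-distances on the complement stay uniformly bounded below. ``Countably piece-wise continuous'' needs to be strong enough to guarantee that the set of points arbitrarily close to a label change has small measure; the atomless assumption handles individual points but not necessarily a positive-measure ``fuzzy'' boundary. I expect the paper's definition of countably piece-wise continuous to be tailored precisely so that, after removing an $\eps$-mass neighborhood of the (countably many) boundary pieces, the remaining set is a finite union of regions each at positive distance from its enemies — making the covering number $N(\rho)$ finite. Getting this decomposition clean, and verifying it interacts correctly with separability (so that totally-bounded subsets of full-ish measure exist), is where the technical care is needed; the probabilistic wrapper via Theorem~\ref{thm:generalization bound} and Borel--Cantelli is then routine.
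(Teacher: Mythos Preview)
Your approach is essentially the paper's: construct a consistent condensed set of sublinear size by covering the ``good'' region (where the label is locally constant at some fixed scale $r^*$) with a net and throwing in all sample points from the ``bad'' complement, then invoke the permutation-invariant bound \eqref{eq:err_bound_perm} with $\delta_n = n^{-2}$ and apply Borel--Cantelli. The one technical point you correctly flagged as the main obstacle---that a separable metric space need not be totally bounded, so the covering number $N(\rho)$ of the good region may be infinite---is resolved in the paper not by restricting to a totally bounded subset but by taking an $r^*$-net \emph{of the sample} $\bX_n \cap U_{r^*}$ (rather than of the region itself) and invoking \citet[Lemma~3.7]{HKSW}, which shows that in any separable metric space with atomless distribution an $r$-net of $n$ i.i.d.\ points has size $o(n)$ with probability $1-O(n^{-2})$; this gives $|\dsc_{(1)}| = o(n)$ directly, without any finiteness of covering numbers.
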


The proof below essentially establishes that $|\dsco|$ is almost surely sub-linear in the sample size $n$. An application of the error bound \eqref{eq:err_bound_perm} of Theorem \ref{thm:generalization bound} (corresponding to a permutation-invariant rule) then establishes \eqref{eq:bc_thm}. Note that a sub-linear $|\dsco|$ in conjunction with the error bound \eqref{eq:err_bound} (corresponding to a non-permutation invariant rule) do not suffice to establish Bayes consistency with our proof technique: Without further assumptions on the tail of the distribution of $x$, the rate at which $|\dsco|/n \xrightarrow[n\to\infty]{} 0$ can be arbitrarily slow.

\begin{proof}[Proof of Theorem \ref{thm:bc_most_comp}.]
Denote by $\mu$ the probability distribution of $x$ and abbreviate $\cl_{\dsco}=\cl_{(\dsco,\w_{\enemy})}$.
For $r>0$ let 
\begin{align*}
    \unifset_r = \left\{ x\in\supp(\mu): 
    \frac{1}{\mu(\ball_r(x))}\int_{\ball_r(x)} l(x')\mu(dx')
    = l(x)
    \right\};
\end{align*}
that is, $\unifset_r$ is the set of all points in the support of $\mu$ where $l$ is essentially constant in the ball of radius $r$ around $x$.
The assumptions that $x$ is atomless and that $l$ is piece-wise continuous imply that $\mu(U_r)$ is monotonic decreasing and continuous in $r$ and satisfies
\begin{align}
\label{eq:piecewise_cond}
\lim_{r\to0}\mu(\unifset_r)=1.    
\end{align}
Given $\eps>0$, let $\rado=\rado(\eps)>0$ be such that
\begin{align*}
    \mu(\unifset_{\rado}) \geq 1- \alpha,
\end{align*}
where $\alpha = \alpha(\eps)\in(0,1/8)$ satisfies
\begin{align}
\label{eq:alpha_def}
\frac{8\alpha\log\left(\frac{e}{2\alpha}\right)}{1-4\alpha} \leq \frac{\eps}{2}.
\end{align}
Since the left hand side of \eqref{eq:alpha_def} goes to 0 monotonically (and continuously) as $\alpha\to 0$, such an $\alpha$ always exists (this choice of $\alpha$ will be made clear below).

Given a sample $\ds$ of size $n$, denote by $\bX_n=(x_1,\dots,x_n)$ the instances in $\ds$ and by $\bY_n=(y_1,\dots,y_n)$ their corresponding labels.
Let $\net_{\rado}\subseteq \bX_n\cap \unifset_{\rado}$ be an $\rado$-net of $\bX_n\cap \unifset_{\rado}$ 
(an $r$-net of a set is defined in Appendix \ref{sec:search})
and let $\bY_{\rado}$ be the corresponding labels in $\bY_n$, stacked into the labeled set $\dsc_{(1)}=(\bX_{\rado},\bY_{\rado})$. 
Let $\unifset_{\rado}^c= \X\setminus\unifset_{\rado}$ denote the set complement of $\unifset_{\rado}$  and let $\dsc_{(2)}=S\cap (\unifset_{\rado}^c \times \Y)$.
 Define the labeled dataset 
\begin{align*}
    \dsc_{\rado} = 
    \dsc_{(1)} \cup \dsc_{(2)} \subset \ds.
\end{align*}
Then $\cl_{\dsc_{\rado}}$ is consistent on $\ds$. Indeed, any $(x_i,y_i)\in \ds\cap (\unifset_{\rado}^c\times \Y)$ is included in $\dsc_{\rado}$ and is thus classified correctly by $\cl_{\dsc_{\rado}}$.
For any $(x_i,y_i) \in \ds\cap (\unifset_{\rado}\times \Y)$, since $\bX_{\rado}$ is an $\rado$-net  of $\bX_n\cap \unifset_{\rado}$, there is $\tX\in \bX_{\rado}$ with 
\begin{align*}
    \dist(x_i,\tX) < \rado.
\end{align*}
Since $\tX\in\unifset_{\rado}$ and $\dist(x_i,\tX) < \rado$, we have $y_i=l(x_i)=l(\tX)=\tY$ with probability 1. In addition, 
any point $(x_j,y_j)\in\ds$ with an opposite label to $\tY$ satisfies 
$\dist(x_j,\tX) \geq \rado$,
and so $\w_{\enemy}(\tX)\geq \rado$.
Thus,
\begin{align*}
    \distw(x_i,\tX) = \frac{\dist(x_i,\tX)}{\w_{\enemy}(\tX)} < 1.
\end{align*}
Additionally,
any $\tX'\in\dsc_{\rado}$ with a different label from $y_i$ has weight
\begin{align*}
\w_{\enemy}(\tX')\leq \dist(x_i,\tX').
\end{align*}
Thus,
\begin{align*}
    \distw(x_i,\tX') = \frac{\dist(x_i,\tX')}{\w_{\enemy}(\tX)} \geq 1.
\end{align*}
So the WNN classifier $\cl_{\dsc_{\rado}}$ classifies the point $(x_i,y_i)$ correctly in this case as well, and so $\cl_{\dsc_{\rado}}$ is consistent on $\ds$.
It follows that the subset $\dsco$ in \eqref{eq:bc_most_comp} computed by the learning rule satisfies
\begin{align}
\label{eq:opt_smaller_net}
    |\dsco| \leq |\dsc_{\rado}|.
\end{align}

We next bound $|\dsc_{\rado}| = |\dsc_{(1)}| +  |\dsc_{(2)}|$ with high probability.
Since by construction $\mu(U_{\rado}^c)\leq \alpha$, Hoeffding's inequality implies that
\begin{align*}
\PROB\left\{|\dsc_{(2)}|  > 2n \alpha\right\} =
    \PROB\left\{|\bX_n \cap \unifset_{\rado}^c| > 2n \alpha\right\} \leq e^{-2n\alpha^2}.
\end{align*}
As for $|\ds_{(1)}|$,
by \citet[Lemma 3.7]{HKSW}, 
there is  $t_{\rado}:\mathbb N\to \mathbb R^+$ in $o(1)$ such that
\begin{align*}
    \PROB\left\{|\net_{\rado}|\geq n t_{\rado}(n)\right\} \leq  1/n^2.
\end{align*}
Since $t_{\rado}\in o(1)$, we may take $n$ sufficiently large so that $t_{\rado}(n)\leq2\alpha$.
So for all sufficiently large $n$,
\begin{align*}
    \PROB\{|\dsc_{\rado}|> 4\alpha n \} \leq \frac{1}{n^2} + e^{-2n\alpha^2 }.
\end{align*}

We bound
\begin{align}
\nonumber
\PROB\left\{\err(\cl_{\dsco})>\eps\right\}  
& \leq 
\PROB\left\{\err(\cl_{\dsco})>\eps \,\,\wedge\,\, |\dsc_{\rado}|\leq 4\alpha n\right\}
+ \PROB\left\{|\dsc_{\rado}|> 4\alpha n\right\}
\\
\label{eq:sum_bound}
&\leq 
\PROB\left\{\err(\cl_{\dsco})>\eps \,\,\wedge\,\, |\dsc_{\rado}|\leq 4\alpha n\right\}
+
\frac{1}{n^2} + e^{-2n\alpha^2}.
\end{align}
To complete the proof we show below that for all sufficiently large $n$,
\begin{align}
\label{eq:comp_prob}
    \PROB\left\{\err(\cl_{\dsco})>\eps \,\,\wedge\,\, |\dsc_{\rado}|\leq 4\alpha n\right\}
    \leq \frac{1}{n^2}.
\end{align}
Since the right hand side of \eqref{eq:sum_bound} is summable over $n$, the Borel-Cantelli Lemma implies that almost surely,
\begin{align*}
\err(h_{\dsc^*}) \xrightarrow[n\to\infty]{} 0, 
\end{align*}
concluding the proof of the Theorem.

To show \eqref{eq:comp_prob},
put $\delta=\delta_n=1/n^2$ 
in \eqref{eq:err_bound_perm} and observe that the 
right hand side 
of \eqref{eq:err_bound_perm} 
is monotonic increasing with $|\dsc|$. 
Thus, using \eqref{eq:opt_smaller_net},
we have that under the event $\{|\dsc_{\rado}|\leq 4\alpha n\}$,
\begin{align*}
\frac{2}{n-|\dsco|}\left(|\dsco|\log{\left(\frac{2en}{|\dsco|}\right)} + 3\log n\right)
&\leq
\frac{2}{n-|\dsc_{\rado}|}\left(|\dsc_{\rado}|\log{\left(\frac{2en}{|\dsc_{\rado}|}\right)} + 3\log n\right)
\\ &\leq
\frac{2}{n-4\alpha n}\left(4\alpha n\log{\left(\frac{en}{2\alpha n}\right)} + 3\log n\right)
\\
&=
\frac{8\alpha\log{\left(\frac{e}{2\alpha}\right)}}{1-4\alpha}+ \frac{3\log n}{(1-4\alpha)n}
\\ &\leq \frac{\eps}{2} + \frac{\eps}{2} = \eps,
\end{align*}
where in the last inequality we used the choice of $\alpha$ in \eqref{eq:alpha_def} and took $n$ sufficiently large so that $\frac{3\log n}{(1-4\alpha)n}\leq\frac{\eps}{2}$.
Applying Lemma \ref{thm:err_comp}, it follows that 
\begin{align*}
     \PROB\left\{\err(\cl_{\dsco})>\eps \,\,\wedge\,\, |\dsc_{\rado}|\leq 4\alpha n\right\}
     &\leq 
      \PROB\left\{\err(\cl_{\dsco})>\frac{2}{n-|\dsco|}\left(|\dsco|\log{\left(\frac{2en}{|\dsco|}\right)} + 3\log n\right)\right\}
      \leq \frac{1}{n^2},
\end{align*}
establishing \eqref{eq:comp_prob}.
\end{proof}

As for our greedy heuristic in Algorithm \ref{alg:weighted-heuristic}, note that the intractable optimization problem \eqref{eq:bc_most_comp} can be cast as a set cover problem.  Algorithm \ref{alg:weighted-heuristic} then corresponds to the standard greedy approximation for set cover \citep{chvatal1979greedy}. This approximation is guaranteed to compute $\dsc$ of size at most $O(|\dsco|\log |\dsco|)$. Hence, if $|\dsco|\log |\dsco|$ is guaranteed to be almost surely sub-linear, an adaptation of our proof of Theorem \ref{thm:bc_most_comp} implies that the greedy heuristic is Bayes consistent. This is made formal in the following Corollary. 

\begin{corollary}
\label{cor:bc_heuristic}
Under the conditions of Theorem \ref{thm:bc_most_comp} and an additional appropriate tail condition on the probability distribution of $x$, the greedy weighted heuristic of Algorithm \ref{alg:weighted-heuristic} is Bayes consistent.
\end{corollary}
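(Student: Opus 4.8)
The plan is to adapt the proof of Theorem \ref{thm:bc_most_comp}, modifying only the steps that bound the size of the condensed set. The greedy heuristic of Algorithm \ref{alg:weighted-heuristic} is the standard greedy set-cover approximation applied to the instance \eqref{eq:bc_most_comp}, so if $T$ denotes its output then $|T| \le |\dsco| \cdot (1+\ln|\dsco|) = O(|\dsco|\log|\dsco|)$. Thus it suffices to re-derive the high-probability bound on $|\dsc_{\rad^*}|$ from the original proof and propagate the extra logarithmic factor through to the error bound \eqref{eq:err_bound_perm}. Concretely, I would first argue that the heuristic's output $T$, together with the weight function $\w_\enemy$ it induces, is consistent on $\ds$ — this is immediate from the stopping condition of the while loop (every point of $S$ is eventually covered by a ball $B(x,d_\enemy(x))$ of its own label, and such a ball, under the weight-equals-radius correspondence of Section \ref{sec:condensing-bounds}, gives weighted distance below $1$ to its center and at least $1$ to any opposite-labeled center). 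Hence $|T| \le |\dsco|\cdot(1+\ln|\dsco|)$, and in particular $|T| \le |\dsc_{\rad^*}|\cdot(1+\ln|\dsc_{\rad^*}|)$ using \eqref{eq:opt_smaller_net}.

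Next I would invoke the tail condition. The role of the additional tail assumption on $\mu$ is exactly to control the rate at which $|\dsc_{\rad^*}|/n \to 0$, which in the original proof was permitted to be arbitrarily slow; the remark following Theorem \ref{thm:bc_most_comp} flags this as the reason a bare sub-linearity statement is insufficient. Under a suitable tail condition (for instance, that the net size $|\net_{\rad^*}|$ and the mass $\mu(\unifset_{\rad^*}^c)$ both decay at a controlled polynomial or logarithmic rate in the relevant parameters, so that one may choose $\rad^* = \rad^*(n)$ tending to $0$ with $n$ while keeping $|\dsc_{\rad^*}| \le n\beta(n)$ for an explicit $\beta(n) = o(1/\log n)$), one gets $|T| \le n\beta(n)\log(1/\beta(n)) \cdot(1+o(1)) =: n\gamma(n)$ with $\gamma(n)\to 0$. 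The point is that $\gamma(n)\log n \to 0$ as well, which is what the error bound will need.

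Finally I would plug this into \eqref{eq:err_bound_perm}. Setting $\delta = \delta_n = 1/n^2$ and using that the right-hand side of \eqref{eq:err_bound_perm} is monotone increasing in $|\dsc|$, on the event $\{|T| \le n\gamma(n)\}$ we bound
\begin{align*}
\frac{2}{n-|T|}\left(|T|\log\left(\frac{2en}{|T|}\right) + 3\log n\right)
\le
\frac{2}{n-n\gamma(n)}\left(n\gamma(n)\log\left(\frac{2e}{\gamma(n)}\right) + 3\log n\right)
= \frac{2\gamma(n)\log\left(\frac{2e}{\gamma(n)}\right) + \frac{6\log n}{n}}{1-\gamma(n)},
\end{align*}
which tends to $0$ since $\gamma(n)\to 0$ and $\gamma(n)\log(1/\gamma(n))\to 0$. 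Combining the probability of the bad event $\{|T| > n\gamma(n)\}$ (summable over $n$ by the tail condition, as in \eqref{eq:sum_bound}) with the failure probability $\delta_n = 1/n^2$ of the compression bound and applying Borel–Cantelli gives $\err(h_{(T,\w_\enemy)}) \to 0$ almost surely.

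The main obstacle is pinning down precisely which tail condition makes the chain go through: one must choose a radius schedule $\rad^*(n)\to 0$ and simultaneously guarantee (i) $\mu(\unifset_{\rad^*(n)}^c)$ is small enough for the Hoeffding step, (ii) the OptiNet-style net-size bound of \citet[Lemma 3.7]{HKSW} yields $|\net_{\rad^*(n)}|$ sub-linear with a quantitative rate, and (iii) the resulting $\gamma(n)$ still satisfies $\gamma(n)\log(1/\gamma(n)) \to 0$ after absorbing the greedy $\log|\dsco|$ overhead. A clean sufficient condition — e.g. a polynomial bound on the covering numbers of $\supp(\mu)$ at scale $r$ together with a polynomial decay of $\mu(\unifset_r^c)$ — makes all three quantitative, and I would state the corollary under such a hypothesis; the rest is the routine re-run of the Theorem \ref{thm:bc_most_comp} argument sketched above.
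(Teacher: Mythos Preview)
Your proposal is correct and follows the same high-level strategy as the paper: invoke the greedy set-cover approximation ratio to get $|T|=O(|\dsco|\log|\dsco|)$, strengthen the tail assumption so that this quantity is still sublinear, and re-run the proof of Theorem~\ref{thm:bc_most_comp} with the permutation-invariant bound \eqref{eq:err_bound_perm} and Borel--Cantelli.

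The one genuine difference is in how the tail condition is formulated. The paper's proof is a single sentence: it keeps $\rado=\rado(\eps)$ \emph{fixed} (as in the proof of Theorem~\ref{thm:bc_most_comp}) and simply asks that the net-growth function $t_{\rado}$ fetched from \citet[Lemma~3.7]{HKSW} lie in $o(1/\log n)$ rather than merely $o(1)$. You instead propose a radius schedule $\rado(n)\to 0$ and simultaneous quantitative control of both $\mu(\unifset_{\rado(n)}^c)$ and the covering numbers. Your version is more elaborate but also more explicit about what is actually needed to absorb the extra $\log|\dsco|$ factor; the paper leaves the condition deliberately informal (``an additional appropriate tail condition'') and only spells it out via the two examples following the corollary (bounded support giving $|\dsco|=O(1)$, Gaussian giving $|\dsco|=O(\log n)$). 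Your closing paragraph on the ``main obstacle'' is, if anything, more careful than the paper's own treatment.
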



\begin{proof}[Proof of Corollary \ref{cor:bc_heuristic}]
In the proof of Theorem \ref{thm:bc_most_comp} we fetched a function $t_{\rado}:\mathbb N\to \mathbb R^+$ in $o(1)$ to establish that the size of a $\rado$-net (with $\rado>0$) is sub-linear in $n$.
Inspecting the proof of Theorem \ref{thm:bc_most_comp}, to guarantee that $|\dsco|\log |\dsco|$ is almost surely sub-linear in $n$, it suffices to additionally assume that $t_{\rado}\in o(1/\log n)$.
\end{proof}

As two examples of the applicability of Corollary \ref{cor:bc_heuristic}, if random variable $x$ is bounded then $|\dsco|= O(1)$, and if $x$ has a normal distribution then $|\dsco|= O(\log n)$. Hence in these examples Algorithm \ref{alg:weighted-heuristic} is Bayes consistent.

\section{Search algorithm}\label{sec:search}

\algnewcommand\algorithmicforeach{\textbf{for each}}
\algdef{S}[FOR]{ForEach}[1]{\algorithmicforeach\ #1\ \algorithmicdo}

In previous sections, we motivated the WNN rule, and gave a heuristic for producing a condensed set and weights. In this section, we consider the algorithmic problem of, given a weighted condensed set $\Tilde{S}$ and weight function $w$, labeling a new point $q$ (the \emph{query}) via the WNN rule on $\Tilde{S},w$. This is the same as finding the weighted distance nearest neighbor of $q$ in $\Tilde{S}$

We note that if the number of distinct weights in $w$ is at most $k$, then we can reduce the above problem to solving $k$ separate (unweighted) nearest neighbor problems: Group the points of $\Tilde{S}$ by into at most $k$ groups, each containing only points of the same weight. For each group, preprocess an unweighted nearest neighbor search structure. Given the query $q$, find the nearest unweighted neighbor of $q$ within each group. One of these $k$ points must be the weighted distance nearest neighbor.

We will give an approximate search algorithm with run-time independent of the number of distinct weights. This algorithm is a modification of the navigating net algorithm for doubling spaces \cite{KL04}, and we first describe this structure.

\paragraph{Navigating nets.}
For a metric $(\X,d)$, let $\lambda$ be the smallest value such that every ball in $\X$ of radius $r$ (for any $r$) can be covered by $\lambda$ balls with radius $\frac{r}{2}$. $\lambda$ is the \emph{doubling constant} of $\X$, and $\log_2 \lambda$ is the \emph{doubling dimension} of $\X$ ($\ddim := \ddim(\X)$). 
Doubling spaces also obey a \emph{packing property}: The number of points at minimum inter-point distance at least $b$ within a ball of radius $r$ is at most $(r/b)^{O(\ddim)}$.

An $r$-\emph{net} of $\X$ is a subset $S \subset \X$ satisfying the following:
\begin{itemize}[noitemsep,leftmargin=*]
    \item Packing: The minimum inter-point distance in $S$ is greater than $r$.
    \item Covering: For every $x \in \X$, $d(x,S) \le r$.
\end{itemize}

A \emph{navigating net} is a nested series of $r$-nets, each set a net of the subsequent one with twice its radius. Assume without loss of generality that the smallest inter-point distance in $\X$ is exactly 1, and let $t$ be the smallest integer satisfying $2^t \ge \diam(\X)$. Then the navigating net is composed of a series of $r$-nets $\{S_{2^t},S_{2^{t-1}},\ldots,S_1\}$ where for all nets $S_{2^{i+1}} \subset S_{2^i}$. This series of $r$-nets is called a \emph{hierarchy}. 

A tree is then defined over the hierarchy: To every point in each level of the hierarchy we associate a unique vertex, so that each point in $S_{2^i}$ corresponds to some $i$-level vertex. To an $i$-level vertex associated with some point $p \in S_{2^i}$ ($i<t$) we assign a \emph{parent}, a single $(i+1)$-level node associate with (an arbitrary) point in $S_{2^{i+1}}$ within distance $2^{i+1}$ of $p$. The navigating net is composed of the hierarchy and its associated tree, and can be computed in time $2^{O(\ddim)} t$.

\paragraph{Our solution.}
Given a navigating net, we will need to make a simple modification to it: For every node $v \in T$, we record heaviest point among all the points associated with the nodes in the sub-tree rooted at $v$.

A $(1+\epsilon)$-approximate weighted nearest neighbor query (for $\epsilon<1$) proceeds as follows: Starting from the root at level $t$, the query descends down the levels of the tree. At each level $i$, we compute a list of all nodes whose associated points are within (unweighted) distance $\frac{2 \cdot 2^i}{\epsilon}$ of query point $q$ (and by the packing property of doubling spaces, there are $\epsilon^{-O(\ddim)}$ such nodes). This can be done easily, as by the triangle inequality, an $i$-level nodes within this distance of $q$ has its $(i+1)$-level parent within distance

\begin{equation*}
\frac{2 \cdot 2^i}{\epsilon} + 2^{i+1} 
= \frac{2^{i+1}}{\epsilon} + 2^{i+1}
< \frac{2 \cdot 2^{i+1}}{\epsilon},
\end{equation*}
and so this parent was found in the previously computed list for level $i+1$. So the node list for level $i$ can be computed by inspecting all children of the nodes in the list for level $i+1$, and retaining only those nodes whose associated points are sufficiently close to query $q$. For each node in every list, the algorithm queries the weighted distance of the associated point to $q$. The point with the smallest weighted distance is returned. See Algorithm \ref{alg:nn2-query}, and its run time in Lemma \ref{lem:nn2}.

\begin{algorithm}
\caption{Weighted distance nearest neighbor}\label{alg:nn2-query}
\begin{algorithmic}
\State Input: Modified navigating net tree $T$, query $q$
\State Initialize nearest $\gets \emptyset$, $L \gets \{ \text{root of } T\}$
\ForEach{$i=t,\ldots,1$}
    \State $L' \gets \emptyset$
    \ForEach{$v \in L$}
        \ForEach{child $w$ of $v$}
            \If{$d(q,w) \le \frac{2 \cdot 2^i}{\epsilon}$}
                \State $L' \gets L' \cup \{w\}$
                \If{$\tilde{d}(q,w) < \tilde{d}(q,\text{nearest})$}
                    \State nearest $\gets w$
                \EndIf
            \EndIf
        \EndFor
    \EndFor
    \State $L \gets L'$
\EndFor
\State return nearest
\end{algorithmic}
\end{algorithm}

\begin{lemma}\label{lem:nn2}
Algorithm \ref{alg:nn2-query} returns a $(1+\epsilon)$-approximate weighted distance nearest neighbor of $q$ in time $\epsilon^{-O(\ddim)}t$.
\end{lemma}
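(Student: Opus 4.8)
The plan is to establish two things: correctness (the returned point is a $(1+\epsilon)$-approximate weighted distance nearest neighbor) and the runtime bound $\epsilon^{-O(\ddim)}t$. I will take the runtime first, as it is the more routine part. The algorithm processes $t$ levels. At each level $i$ the list $L$ contains only nodes whose associated points lie within unweighted distance $\frac{2\cdot 2^i}{\epsilon}$ of $q$; since the points in the net $S_{2^i}$ have pairwise distance greater than $2^i$, the packing property of doubling spaces bounds $|L|$ by $\left(\frac{2\cdot 2^i/\epsilon}{2^i}\right)^{O(\ddim)} = \epsilon^{-O(\ddim)}$. Each node in $L$ has at most $2^{O(\ddim)}$ children in the navigating net (again by packing, applied to the next finer net), and for each child we do $O(1)$ work (a distance computation, a weighted-distance computation, and a comparison). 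Hence each level costs $\epsilon^{-O(\ddim)}\cdot 2^{O(\ddim)} = \epsilon^{-O(\ddim)}$, and the total is $\epsilon^{-O(\ddim)}t$. I should also note the preprocessing: augmenting each tree node with the heaviest point in its subtree is a one-time bottom-up pass costing $2^{O(\ddim)}t$ per point (or done globally when the net is built); this matches the stated navigating-net construction time and does not affect the query bound.

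For correctness I first need to verify the loop invariant implicit in the algorithm's description: at the start of processing level $i$, the list $L$ contains \emph{every} $i$-level node whose associated point is within unweighted distance $\frac{2\cdot 2^i}{\epsilon}$ of $q$. The excerpt already supplies the key inequality: any such $i$-level node has its $(i+1)$-level parent within distance $\frac{2\cdot 2^i}{\epsilon}+2^{i+1} < \frac{2\cdot 2^{i+1}}{\epsilon}$, so that parent was in the level-$(i+1)$ list by the induction hypothesis; since the algorithm inspects all children of all nodes in that list and keeps exactly those passing the distance test, the invariant propagates. The base case is the root at level $t$, which trivially satisfies the invariant since $2^t\ge\diam(\X)$ places every point within the threshold.

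The heart of the argument is showing that the point $x^\star$ of minimum weighted distance to $q$ is actually examined — or that some point of comparably small weighted distance is examined. Let $p$ denote the true weighted nearest neighbor, with $\distw(q,p)=\frac{d(q,p)}{w(q)w(p)}$; since $q\notin\dsc$ we have $w(q)=1$, so $\distw(q,p)=\frac{d(q,p)}{w(p)}$. I would argue by cases on the scale. If $p$ itself is close to $q$ in the unweighted sense — specifically $d(q,p)\le\frac{2\cdot 2^{i}}{\epsilon}$ for the level $i$ at which $p$'s node first appears in some list — then $p$ is examined directly and we are done exactly. Otherwise $p$ is ``far'' and its large weight $w(p)$ is what makes $\distw(q,p)$ small; here I use the subtree-maximum augmentation. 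Consider the coarsest level $i$ such that there is an ancestor node $v$ of $p$'s node in the list $L$ at level $i$ with $d(q,v)\le\frac{2\cdot 2^i}{\epsilon}$ but whose children (including $p$'s ancestor one level down) are all \emph{dropped} because they exceed the level-$(i-1)$ threshold $\frac{2\cdot 2^{i-1}}{\epsilon}=\frac{2^{i}}{\epsilon}$. At this point $v$'s recorded heaviest-subtree point, call it $p'$, satisfies $w(p')\ge w(p)$, and its associated net point lies within distance $2^{i}$ of $v$ along the tree (bounded by the geometric sum of net radii $2^{i-1}+2^{i-2}+\cdots < 2^{i}$), hence $d(q,p')\le d(q,v)+2^{i}\le\frac{2\cdot 2^i}{\epsilon}+2^i$. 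I would then need the complementary lower bound: because $p$'s ancestor was dropped at level $i-1$, $d(q,p)\ge\frac{2^{i}}{\epsilon}-2^{i}$ roughly (distance from $q$ to the dropped ancestor minus the remaining net tail down to $p$), which forces $2^i \ge \Omega(\epsilon\, d(q,p))$. Combining, $\distw(q,p') = \frac{d(q,p')}{w(p')} \le \frac{d(q,p)+O(2^i)}{w(p)} \le (1+O(\epsilon))\distw(q,p)$, and since $p'$ is examined, the returned point has weighted distance at most $(1+\epsilon)\distw(q,p)$ after absorbing constants into the $O(\epsilon)$ (adjusting the threshold constant $2/\epsilon$ as needed). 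The main obstacle is making this last chain of inequalities tight enough to yield a clean $(1+\epsilon)$ factor rather than $(1+O(\epsilon))$ — this requires choosing the constant in the search radius $\frac{c\cdot 2^i}{\epsilon}$ carefully and bounding the net-tail geometric series against it, which is the one genuinely delicate calculation in the proof.
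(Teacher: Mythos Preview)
Your runtime analysis is correct and matches the paper's. The correctness argument, however, takes a more roundabout path than the paper's and, as you wrote it, has a genuine slip in the final inequality chain.

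The paper does not split into ``close'' and ``far'' cases, nor does it look for the level at which $p^*$'s ancestor chain first drops out of the list. Instead it picks the level $i$ \emph{directly} from the unweighted distance: the unique $i$ with $d(q,p^*) < \frac{2^i}{\epsilon} \le 2\,d(q,p^*)$, so that $2^i \le 2\epsilon\, d(q,p^*)$. A short triangle inequality then shows the level-$i$ ancestor $v$ of $p^*$ is in the list, and the heaviest point $p$ stored at $v$ satisfies $d(q,p) \le d(q,p^*) + d(p^*,v) + d(v,p) \le d(q,p^*) + 4\cdot 2^i \le (1+8\epsilon)\,d(q,p^*)$, together with $w(p)\ge w(p^*)$. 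That is the whole argument.

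Your route can be made to work, but not along the inequality chain you wrote. You bound $d(q,p') \le d(q,v) + 2^i \le \frac{2\cdot 2^i}{\epsilon} + 2^i$, while your lower bound is only $d(q,p) \ge \frac{2^i}{\epsilon} - 2^i$; the ratio of these two is roughly $2$, not $1+O(\epsilon)$, so the step ``$d(q,p') \le d(q,p) + O(2^i)$'' does not follow from what precedes it. The fix is to route the triangle inequality through $p$ rather than through $v$: both $p$ and $p'$ lie in the subtree of $v$, so $d(p,p') \le 4\cdot 2^i$, giving $d(q,p') \le d(q,p) + 4\cdot 2^i$, and then your lower bound on $d(q,p)$ yields $2^i \le O(\epsilon\, d(q,p))$ (you wrote $\ge \Omega(\cdot)$, the wrong direction), which is exactly what is needed. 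Once you make that correction your argument goes through with the same constants as the paper's; the paper's version is simply the same computation with the level $i$ chosen more explicitly, which lets it skip the case analysis altogether.
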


\begin{proof}
Let $p^*$ be the weighted nearest neighbor of $q$, and let $i$ be the integer satisfying $d(q,p^*) < \frac{2^i}{\epsilon} \le 2d(q,p^*)$. Let $v$ be the $i$-level tree ancestor of the leaf node associated with $p^*$, and we have by the triangle inequality that 
$d(q,v) 
\le d(q,p^*) + d(p^*,v) 
< \frac{2^i}{\epsilon} + 2 \cdot 2^i 
< \frac{2 \cdot 2^i}{\epsilon}$. 
It follows that $v$ is found in the $i$-level list. If $p^*$ is the heaviest ancestor of $v$, then it will be encountered by the algorithm and returned. If a different point $p$ is stored in $v$, then necessarily $w(p) \ge w(p^*)$. And further,
$d(q,p) 
\le d(q,v) + d(v,p)
\le d(q,p^*) + d(v,p^*) + d(v,p)
\le d(q,p^*) + 4 \cdot 2^i
\le (1+8\epsilon) d(q,p^*)
$,
so that $p$ is a $(1+8\epsilon)$-approximate nearest neighbor.
\end{proof}

\section{Experimental results}
\label{sec:results}

In this section we present promising experimental results for condensing under WNN, using the heuristic of Section \ref{sec:heuristic}. We ran two separate trials: one on small data sets, and the other on a large data set. Each of these trials contribute to understanding and quantifying the quality of our heuristic.

\subsection{Small dataset trial}
The first trial was on relatively small datasets. For these sets, we can use an integer program (IP) to compute the optimal solution under NN condensing, which is instructive in understanding the quality of the tested heuristics.

\paragraph{Integer programming for NN condensing.} We formalize an integer program for NN condensing. As this problem is NP-hard, we do not expect the algorithm to have a reasonable run time for large sets, but for smaller sets it successfully returns an optimal solution (after large run time).

To model the NN condensing problem as an integer program, we introduced constraints corresponding to the inclusion of a point in the condensed set. This allowed us to identify the minimal non-empty subset of examples that can recover all labels of the sample via the nearest neighbor classifier.

For each sample point $x$, we introduce a 0–1 variable $v(x)$, corresponding to whether $x$ will appear in the condensed set. For each ordered pair of points $x$ and $x'$ with opposite labels, we introduce the constraint
$
v(x') \leq \sum_{x'' \in C(x,x')} v(x'')
$,
where $C(x, x')$ is the set of points with the same label as $x$ which are all closer to $x$ than $x'$ is to $x$. (Note that  $x \in C(x, x')$.)
This constraint enforces that if $x'$ appears in the condensed set (meaning $v(x')=1$), then there must be in the condensed set some point closer to $x$ with the same label as $x$. We also need the constraint 
$
\sum_{x} v(x) \geq 1
$,
to disallow the empty set.
Finally, the objective is to minimize $\sum_x v(x)$,
which corresponds to minimizing the size of the condensed set.
We implented this program using the Python cvxpy library.

\paragraph{Datasets.}
We ran trials on the banana, circle and iris data sets, see Figure \ref{fig:all_data}. 
\begin{itemize}[noitemsep,leftmargin=*] 
    \item Banana. This data set is a synthetic collection, previously used by \citet{FM-21} for NNC. It contains instances arranged in several banana-shaped clusters. (The $x$ and $y$ axes represent the respective properties At1 and At2 defined there.) For our experiment, we retained only 200 of more than 5000 original points. 
    \item Circle. This is a synthetic randomized data set containing 200 points. It contains instances arranged in a circular cluster, surrounded by instances of the opposing class.
    \item Iris. This is the very popular data set of the UCI Machine Learning Repository. It consists of three classes, each containing 50 instances of a certain species of iris. For our experiments, we considered only two classes of the three (namely Setosa and Versicolour). 
\end{itemize}

\begin{figure}
    \centering
    \includegraphics[width=.9\linewidth]{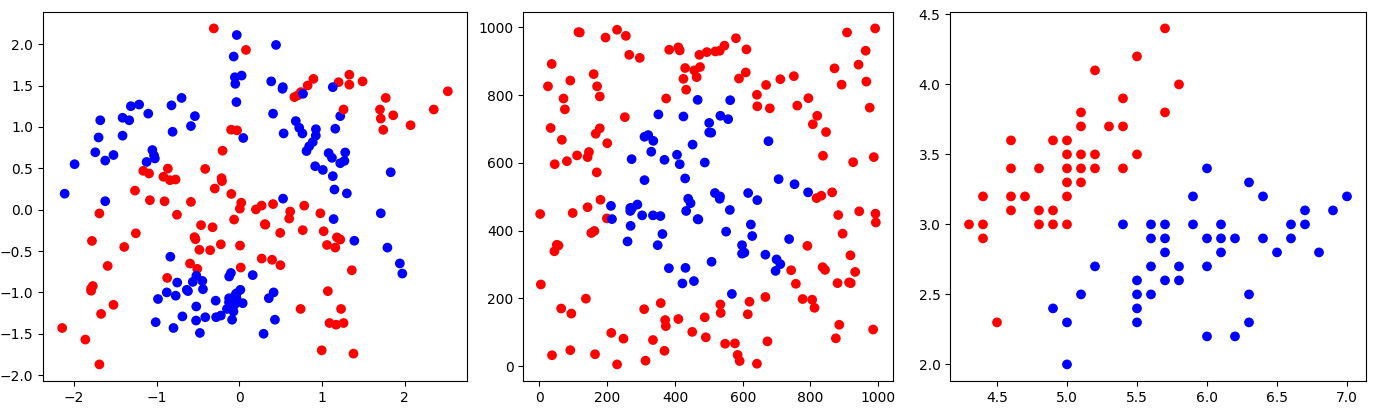}
    \caption{The banana, circle and iris data sets}
    \label{fig:all_data}
\end{figure}

\paragraph{Experiments and results.}
As proof of concept, we ran on the above datasets multiple algorithms for NN condensing -- namely, the MSS \citep{BFS-05} and RSS \citep{FM-21} heuristics, and the exact IP described above -- and also our greedy heuristic for WNN condensing, as presented above in Algorithm \ref{alg:weighted-heuristic}.

We found that our weighted condensing heuristic achieved superior compression when compared to the unweighted heuristics, see Table \ref{tab:condense}. Interestingly, it was still slightly worse that the optimal unweighted solution achieved by the IP.

\begin{table}[ht]
\caption{Comparison of achieved condensing}
\centering\label{result:chosen}
\begin{tabular}{||c c c c c c||} 
 \hline
 Dataset & Points & MSS & RSS & IP & WNN  \\ [0.5ex] 
 \hline\hline
 Circle & 200 & 52 & 45 & 7 & 12  \\ 
 \hline
 Banana & 200 & 74 & 66 & 32 & 35   \\
 \hline
 Iris & 100 & 11 & 9 & 2 & 4   \\
 \hline
\end{tabular}
\label{tab:condense}
\end{table}

%

\subsection{Large dataset trial}
We also investigated the performance of the above heuristics on the notMNIST dataset, published by Yaroslav Bulatov in 2011, and further studied in the context of NN classification by \citet{balsubramani2019adaptive,kerem2023error}. This dataset consists of approximately $19,000$ different font glyphs of the letters A-J, that is ten classes. Each image is of dimension $28\times 28$.
Since this data set is quite large, the IP utilized above is not feasible. Yet for large data sets, we are able to sample from the whole set, and this allows us to report and compare the test error achieved by the classifiers.

To facilitate the execution of our experiment on a standard computer, we first applied uniform manifold approximation and projection dimensionality reduction (UMAP) \citep{mcinnes2018umap}. This served to reduce the dimension from $28\times28$ to only $3$. (See \citet{kerem2023error} for a visual demonstration.)

The dataset was split into train ($70\%$) and test ($30\%$) sets.
We considered the heuristics used in the previous section, that is MSS and RSS for NN condensing, and our greedy heuristic for the WNN rule. As we are also interested in test error, we compared these results to the case where the sample is not compressed (we refer to this as 1-NN).
Figure \ref{fig:notMNIST_results} (top) shows the error obtained on the test set, over 10 realizations of the random split.
The compression ratios achieved are shown in 
Figure \ref{fig:notMNIST_results} (bottom).

These results highlight the following:
RSS was the least competitive in error and compression. While MSS achieved better compression than our greedy WNN, its test error was worse. In particular, the greedy WNN achieved essentially the same error as the standard nearest neighbor rule, while compressing the data-set by approximately $80\%$.

\begin{figure}
    \centering
    \includegraphics[width =.49\columnwidth]{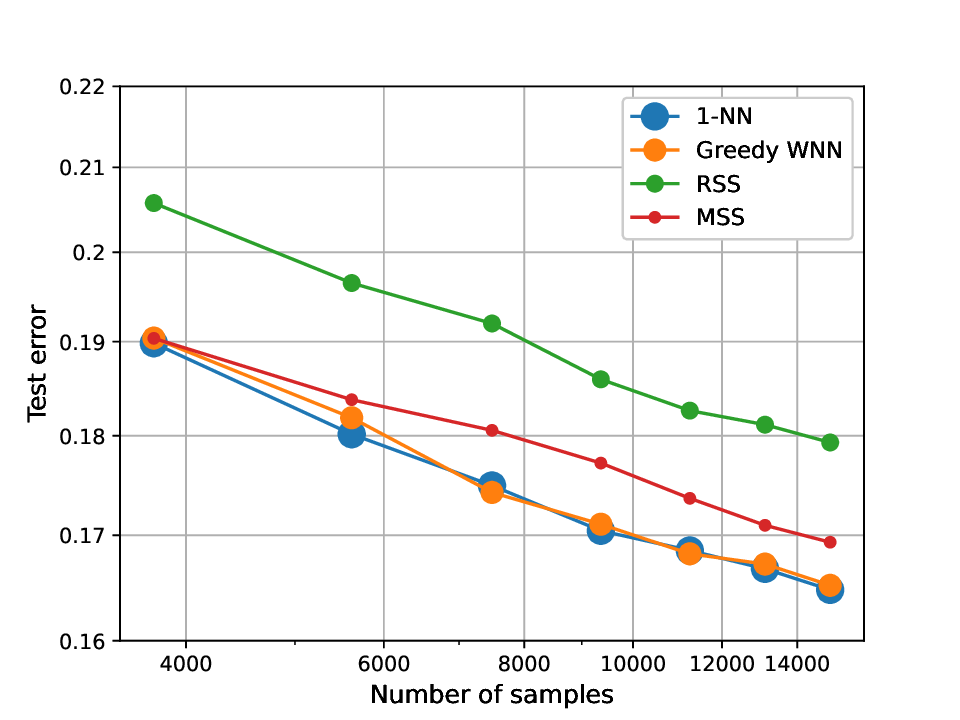}
    \includegraphics[width =.49\columnwidth]{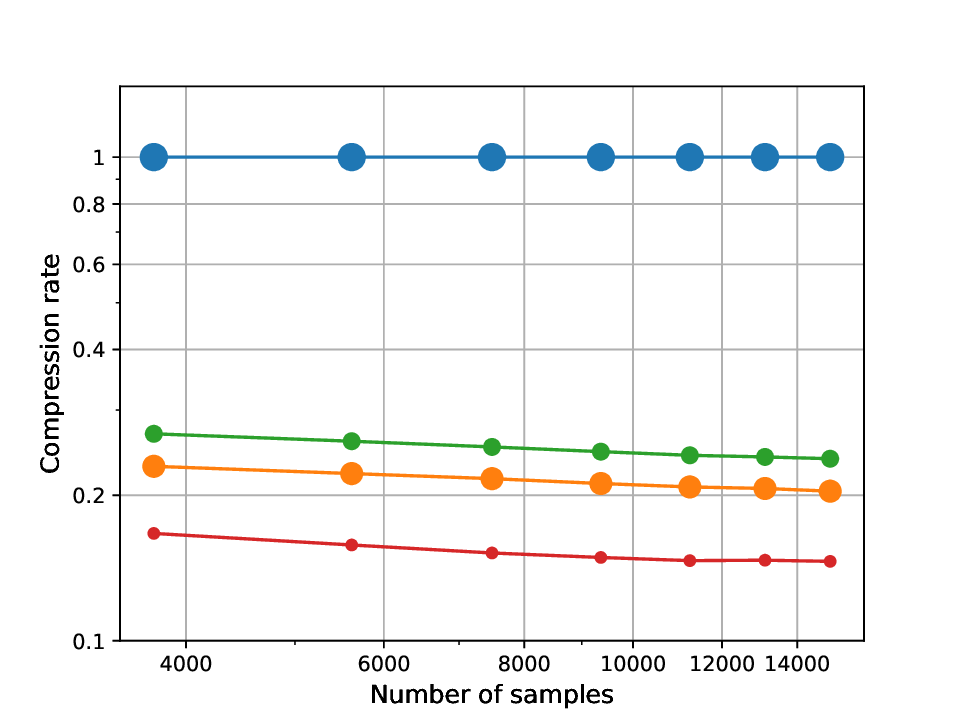}
    \caption{Comparing the test error and compression rates achieved by our greedy WNN heuristic with other popular condensing heuristics.}
    \label{fig:notMNIST_results}
\end{figure}

\section{Conclusions}
We have demonstrated that WNN condensing is more powerful than standard NN condensing, yet is characterized by similar generalization bounds. Hence WNN can only improve the degree of compression, while maintaining the same theoretical guarantees. Our suggested heuristic is theoretically sound, and gave promising empirical results.
This indicates that WNN condensing heuristics are deserving of further study.





\bibliographystyle{plainnat}
\bibliography{refs_imported, refs}

\end{document}